\documentclass{article}

\usepackage{iclr2027_conference,times}
\usepackage{latexsym}
\usepackage[T1]{fontenc}
\usepackage[utf8]{inputenc}
\usepackage{microtype}
\usepackage{inconsolata}
\usepackage{graphicx}
\usepackage{amsmath}
\usepackage{amssymb}
\usepackage{amsthm}
\usepackage{bm}
\usepackage{booktabs}
\usepackage{threeparttable}
\usepackage{multirow}
\usepackage{float}
\usepackage{wrapfig}
\usepackage{needspace}
\usepackage{array}
\usepackage{colortbl}
\usepackage{tcolorbox}
\tcbuselibrary{breakable}
\usepackage{fvextra}
\usepackage{capt-of}
\usepackage{hyperref}
\usepackage{url}
\hypersetup{colorlinks=true,linkcolor=blue,citecolor=blue,urlcolor=blue}

\newcommand{\method}{SCAPO}
\newcommand{\methodfull}{Semifactual Credit-Augmented Policy Optimization}
\newcommand{\best}[1]{\textbf{#1}}
\newcommand{\second}[1]{\underline{#1}}
\newcommand{\tokspace}{\raisebox{-0.16ex}{\rule{0.55em}{0.22ex}}}
\definecolor{TableGroup}{HTML}{DCE7FA}
\definecolor{TableAvg}{HTML}{FFE9D6}
\definecolor{PromptBoxHeader}{HTML}{55ADD1}
\definecolor{PromptBoxBody}{HTML}{EAF6FB}
\definecolor{PromptBoxBorder}{HTML}{55ADD1}

\title{Semifactual Credit-Augmented Policy\\Optimization}

\author{%
\textbf{Junshu Pan\textsuperscript{1,2,3}}\quad
\textbf{Zhizhang Fu\textsuperscript{2}}\quad
\textbf{Shulin Huang\textsuperscript{1,2}}\quad
\textbf{Yiran Ding\textsuperscript{2}}\quad
\textbf{Zifan Cheng\textsuperscript{1}}\\
\textbf{Wenqi Shao\textsuperscript{3,4}}\quad
\textbf{Qiaosheng Zhang\textsuperscript{3,4}}\quad
\textbf{Yue Zhang\textsuperscript{2}}\thanks{Corresponding author.}\\
\textsuperscript{1}Zhejiang University\quad
\textsuperscript{2}Westlake University\\
\textsuperscript{3}Shanghai Innovation Institute\quad
\textsuperscript{4}Shanghai AI Laboratory\\
\texttt{\{panjunshu,zhangyue\}@westlake.edu.cn}
}

\iclrfinalcopy

\begin{document}
\maketitle
\fancyhead{}
\lhead{Semifactual Credit-Augmented Policy Optimization}

\begin{abstract}
Reinforcement learning with verifiable rewards (RLVR) has improved the reasoning capabilities of large language models (LLMs), yet their predictions remain sensitive to task-irrelevant prompt features.
We investigate this sensitivity through semifactual prompt interventions that preserve the underlying problem and its answer.
Our analysis reveals substantial variation in token-level sensitivity and shows that suppressing high-drift token candidates during decoding improves reasoning accuracy without updating model weights.
These findings highlight a limitation of Group Relative Policy Optimization (GRPO), which assigns the same outcome-derived advantage to every response token and may reinforce potential spurious dependence alongside useful reasoning.
Motivated by this observation, we introduce \methodfull{} (\method{}), a causally inspired variant of GRPO that incorporates semifactual stability into token-level credit assignment.
\method{} measures token probability drift for fixed responses under semifactual interventions and uses normalized stability scores to reduce advantages for relatively unstable tokens during early training, while granting no additional credit for stability alone.
On Qwen3-4B-Base and Qwen3-1.7B-Base, \method{} improves AIME 2024–2026 accuracy over GRPO by 5.63 and 4.17 percentage points, respectively.
At both model scales, \method{} achieves the best results on most evaluated mathematics benchmarks and all evaluated out-of-distribution benchmarks among the compared methods.
These results suggest that semifactual stability provides an effective training signal for improving reasoning and generalization through finer-grained credit assignment in RLVR.
The code is available at \url{https://github.com/DtYXs/SCAPO}.
\end{abstract}

\begin{figure}[!htb]
\centering
\includegraphics[width=\textwidth]{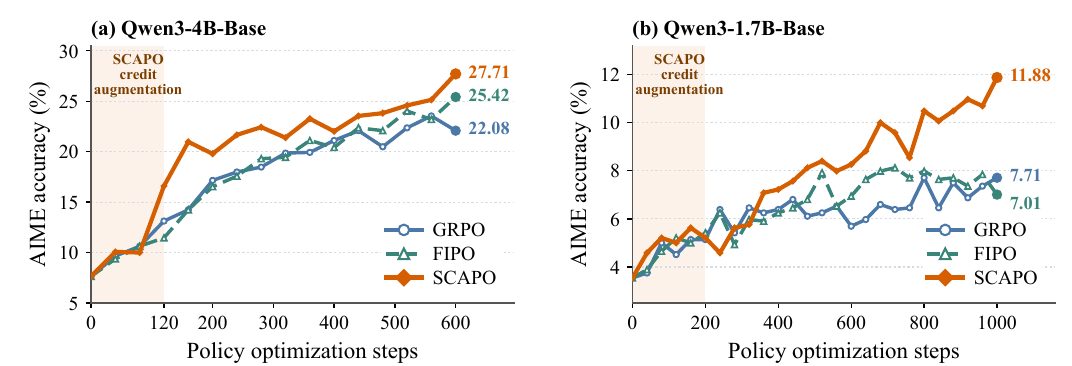}
\caption{\textbf{\method{} achieves the highest AIME accuracy on the two Qwen3 base models.}
Aggregate AIME 2024--2026 accuracy for \method{}, GRPO, and FIPO.
\method{} reaches GRPO's final accuracy in fewer than half as many policy optimization steps and achieves higher final accuracy than baselines at both model scales.
Shaded regions mark SCAPO's semifactual credit augmentation phase.}
\label{fig:grpo-semifact-aime}
\end{figure}

\section{Introduction}

Reinforcement learning with verifiable rewards (RLVR) has advanced the development of large language models (LLMs) with strong reasoning capabilities, such as OpenAI o1~\citep{openai2024reasoning} and DeepSeek-R1~\citep{guo2025deepseek}.
Group Relative Policy Optimization (GRPO)~\citep{shao2024deepseekmath}, a representative RLVR method, optimizes automatically checkable outcomes without process-level supervision.
Recent evidence suggests that RLVR can diminish dependence on spurious correlations and improve generalization under distribution shifts~\citep{fu2026correlation}.
However, robustness evaluations still reveal sensitivity to irrelevant information and input perturbations~\citep{mirzadeh2025gsm,huang2025thinkbench}.
Understanding how training factors affect generalization in RLVR and how to further improve it remains underexplored.

We study spurious dependence in RLVR at the token level.
Building on prior studies of spurious feature reliance~\citep{wang2022identifying,fu2026correlation}, we probe causal invariance~\citep{peters2016causal,arjovsky2019invariant} in LLM inference through semifactual prompt interventions~\citep{goodman1947problem,lu2022rationale}.
Specifically, we alter task-irrelevant prompt features while preserving the mathematical problem and its answer, using drift in the token probabilities of a fixed response as a proxy for potential spurious dependence.
We find that suppressing high-drift token candidates during decoding improves the accuracy of Qwen3-4B-Base~\citep{yang2025qwen3} from 15.8\% to 30.0\% on a 1,000-question mathematical diagnostic panel without updating model weights (see Section~\ref{sec:motivating-observation}).
The above results suggest that LLMs are highly sensitive to token-level spurious features.
However, GRPO assigns the same outcome-derived advantage to every valid token in a response, without directly accounting for this token-level sensitivity.
As a result, positive outcome credit may reinforce potential spurious dependence alongside useful reasoning.

To address the above problem, one intuitive way is to change the RL training process, incorporating semifactual sensitivity into token-level credit assignment.
In this paper, we introduce \textbf{\methodfull{} (\method{})}, a causally inspired variant of GRPO that aims to turn semifactual stability into a token-level credit signal during training.
This signal reveals differences in token sensitivity that final-answer correctness alone cannot distinguish.
Verifiable rewards provide response-level supervision, while semifactual stability refines token-level credit assignment.

In particular, \method{} uses the rollout policy to teacher-force each sampled response under the original prompt and its semifactual perturbations, measuring the probability drifts of each response token.
After aggregating and normalizing these drifts within each prompt group, it adds only the negative part of the resulting stability score to the GRPO advantage as a detached token-level credit augmentation.
Consequently, relatively unstable tokens receive lower advantages, while stability alone earns no additional credit, since stability does not imply correctness.
We apply this credit augmentation during the early phase of training to shape trajectory selection, then continue optimizing the resulting policy with standard GRPO.

\method{} improves AIME 2024--2026 accuracy~\citep{maaAime} over GRPO by \textbf{+5.63} and \textbf{+4.17} points on Qwen3-4B-Base and Qwen3-1.7B-Base~\citep{yang2025qwen3}, respectively.
As shown in Figure~\ref{fig:grpo-semifact-aime}, \method{} reaches GRPO's final AIME accuracy in fewer than half as many policy optimization steps.
Moreover, across both model scales, \method{} outperforms GRPO on all evaluated benchmarks and achieves the best performance on most of the competition-level mathematics benchmarks among all the compared RLVR methods.
\method{} also achieves the highest accuracy on out-of-distribution benchmarks at both model scales.
In addition, our ablations further support the value of aligning credit corrections with semifactual sensitivity and selectively reducing credit for relatively unstable tokens.
These results suggest that semifactual stability complements outcome rewards with an effective signal for finer-grained credit assignment in RLVR.

Our contributions can be summarized as:
\begin{itemize}
    \item We study token-level spurious dependence in LLM inference through semifactual prompt interventions, revealing heterogeneous sensitivity and showing that suppressing high-drift tokens during decoding can improve reasoning accuracy without weight updates.~(Sec.~\ref{sec:motivating-observation})

    \item We introduce \textbf{\method{}}, an approach to token-level credit augmentation in GRPO using detached negative-only corrections derived from fixed-response semifactual stability, without requiring process supervision or an external reward model.~(Sec.~\ref{sec:scapo})

    \item We empirically demonstrate \method{}'s effectiveness on Qwen3-4B-Base and Qwen3-1.7B-Base, achieving the best results on most benchmarks among the compared RLVR methods.
    These results show that semifactual stability can serve as an effective training-time signal for token-level credit assignment in RLVR.~(Sec.~\ref{sec:experiments})
\end{itemize}

\section{Token-Level Semifactual Sensitivity}
\label{sec:motivating-observation}

\begin{figure}[!t]
\centering
\includegraphics[width=\textwidth]{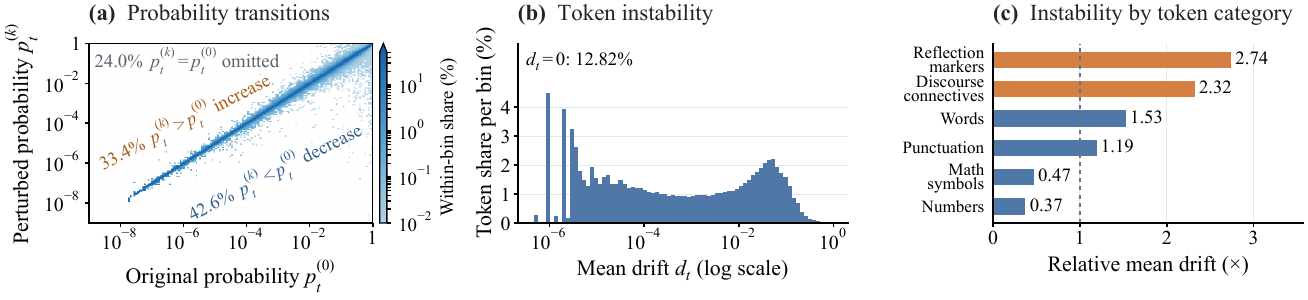}
\caption{\textbf{Semifactual instability varies across tokens.}
(a) Probability transitions under semifactual prompt interventions.
(b) Distribution of mean token drift.
(c) Relative category mean drift normalized by the overall mean, with reflection markers and discourse connectives showing higher sensitivity.
Results use frozen Qwen3-4B-Base before RL.}
\label{fig:motivating-instability}
\end{figure}

We probe potential token-level spurious dependence with semifactual sensitivity.
Using a frozen base model, we characterize how this sensitivity varies across tokens and then examine whether suppressing unstable token candidates improves decoding.

\paragraph{Constructing semifactual perturbations.}
Semifactual perturbations alter incidental features of a prompt while preserving its underlying mathematical problem and final answer~\citep{kenny2021generating}.
Following prior work on reasoning robustness under input perturbations~\citep{mirzadeh2025gsm,huang2025thinkbench}, we construct four types of perturbations: paraphrase, minor typo noise, irrelevant scenario wrapping, and appended irrelevant context.
We use GPT-5.5~\citep{openai2026gpt55} to generate these perturbations, instructing it to preserve quantities, mathematical expressions, constraints, and the requested quantity.

\begin{tcolorbox}[
    colback=PromptBoxBody, colframe=PromptBoxBorder,
    colbacktitle=PromptBoxHeader, coltitle=white,
    title={Example: Paraphrase},
    fonttitle=\small\bfseries, fontupper=\small,
    boxrule=0.5pt, arc=1.5mm, boxsep=0pt,
    left=6pt, right=6pt, top=4pt, bottom=4pt,
    toptitle=3pt, bottomtitle=3pt,
    before skip=6pt, after skip=6pt
]
\textbf{Original:} What is the smallest odd number with four different prime factors?\par
\textbf{Paraphrase:} What is the smallest odd number \textcolor{PromptBoxHeader!55!black}{\textbf{that has four distinct}} prime factors?\par
\textbf{Answer:} $1155$
\end{tcolorbox}

All four perturbed prompts for this example and details of perturbation construction are provided in Appendix~\ref{sec:perturbation-construction}.

\paragraph{Diagnostic setup and drift measurement.}
To characterize semifactual sensitivity before RL training, we sample one response per original prompt from frozen Qwen3-4B-Base on 1,000 questions selected from DAPO-Math-17K.
Sampling details are provided in Appendix~\ref{sec:diagnostic-sampling}.
Let $p_t^{(0)}$ and $p_t^{(k)}$ denote the probabilities assigned to the identical sampled token at position $t$ under the original and the $k$-th perturbed prompts, respectively.
The probability drift measured by the bounded-symmetric distance is defined as follows:
\begin{equation}
    d_t^{(k)}
    =2\tanh\!\left(\left|\log p_t^{(0)}-\log p_t^{(k)}\right|/2\right)
    =\frac{|p_t^{(0)}-p_t^{(k)}|}{\left(p_t^{(0)}+p_t^{(k)}\right)/2}.
\end{equation}

Normalizing by the local mean probability avoids the scale bias of absolute differences and preserves relative probability changes for low-probability tokens.
The distance remains within $[0,2]$, bounding the effect of extreme probability ratios for numerical robustness.
We average across the four perturbations to obtain $d_t=\frac{1}{4}\sum_{k=1}^{4}d_t^{(k)}$.

\paragraph{Heterogeneity in token sensitivity.}
Figure~\ref{fig:motivating-instability}(a) shows both increases and decreases in token probability across a wide range of original confidence levels.
Figure~\ref{fig:motivating-instability}(b) reveals heterogeneity in semifactual sensitivity.
Nonzero probability drift magnitudes span several orders of magnitude, while 12.82\% of positions show no recorded change.
Semifactual sensitivity also varies across token categories, as shown in Figure~\ref{fig:motivating-instability}(c).
Reflection markers adapted from~\citep{wang2025wait} and discourse connectives drawn from~\citep{das2018constructing} exhibit mean drifts of 2.74$\times$ and 2.32$\times$ the overall mean, respectively, whereas mathematical symbols and numbers exhibit 0.47$\times$ and 0.37$\times$, respectively.
This contrast suggests that tokens lexically associated with organizing and reflection are more sensitive to these interventions than mathematical symbols or numbers.
See Appendix~\ref{sec:token-taxonomy} for more details on token category definitions and statistics, and Appendix~\ref{sec:complete-token-cases} for illustrative case studies of token-level semifactual sensitivity.

\begin{wrapfigure}{r}{0.43\textwidth}
\centering
\includegraphics[width=\linewidth]{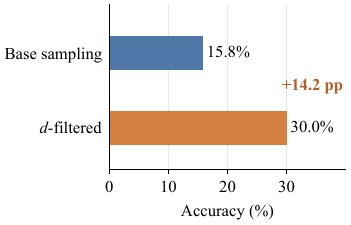}
\caption{\textbf{Semifactual filtering improves rollout accuracy.}
Suppressing high-drift token candidates during rollout generation improves Qwen3-4B-Base's accuracy by 14.2 percentage points on the same 1,000-question diagnostic panel.}
\label{fig:decoding-benefit}
\end{wrapfigure}

\paragraph{Suppressing unstable token candidates improves rollout accuracy.}
To test whether this semifactual sensitivity signal can help improve reasoning accuracy, we evaluate Qwen3-4B-Base under standard sampling and an online $d$-filtered decoding strategy on the same 1,000 selected questions.
Specifically, at each decoding step, we compute drift for every candidate token in the vocabulary, rank non-EOS candidates in descending drift order, and mask the longest prefix whose cumulative probability mass does not exceed 0.8.
We always retain the EOS token, and then renormalize the remaining probabilities.
With model weights unchanged, accuracy rises from 15.8\% to 30.0\% (+14.2 points), as shown in Figure~\ref{fig:decoding-benefit}.
This result demonstrates that semifactual sensitivity provides an actionable token-level signal and motivates its use for finer-grained credit assignment during RLVR training.
More details and additional results are provided in Appendix~\ref{sec:decoding-intervention}.

\WFclear

\section{Semifactual Credit-Augmented Policy Optimization}
\label{sec:scapo}

Building on the decoding benefit of suppressing unstable token candidates, we use semifactual stability to probe potential spurious dependence and refine token-level credit assignment during RLVR training.
Our guiding principle is to reduce the advantages assigned to relatively unstable tokens without granting additional credit for stability alone, since stability does not imply correctness.
In this section, we introduce \methodfull{} (\method{}), a GRPO-based algorithm that combines trajectory-level outcome advantages with token-level semifactual stability for finer-grained credit assignment.
Figure~\ref{fig:semifactual-stability} presents an overview of SCAPO, which consists of semifactual prompt intervention, token probability drift probing, relative stability signal construction, and policy optimization with GRPO advantages augmented by the semifactual credit signal.

\begin{figure}[t]
\centering
\includegraphics[width=0.98\textwidth]{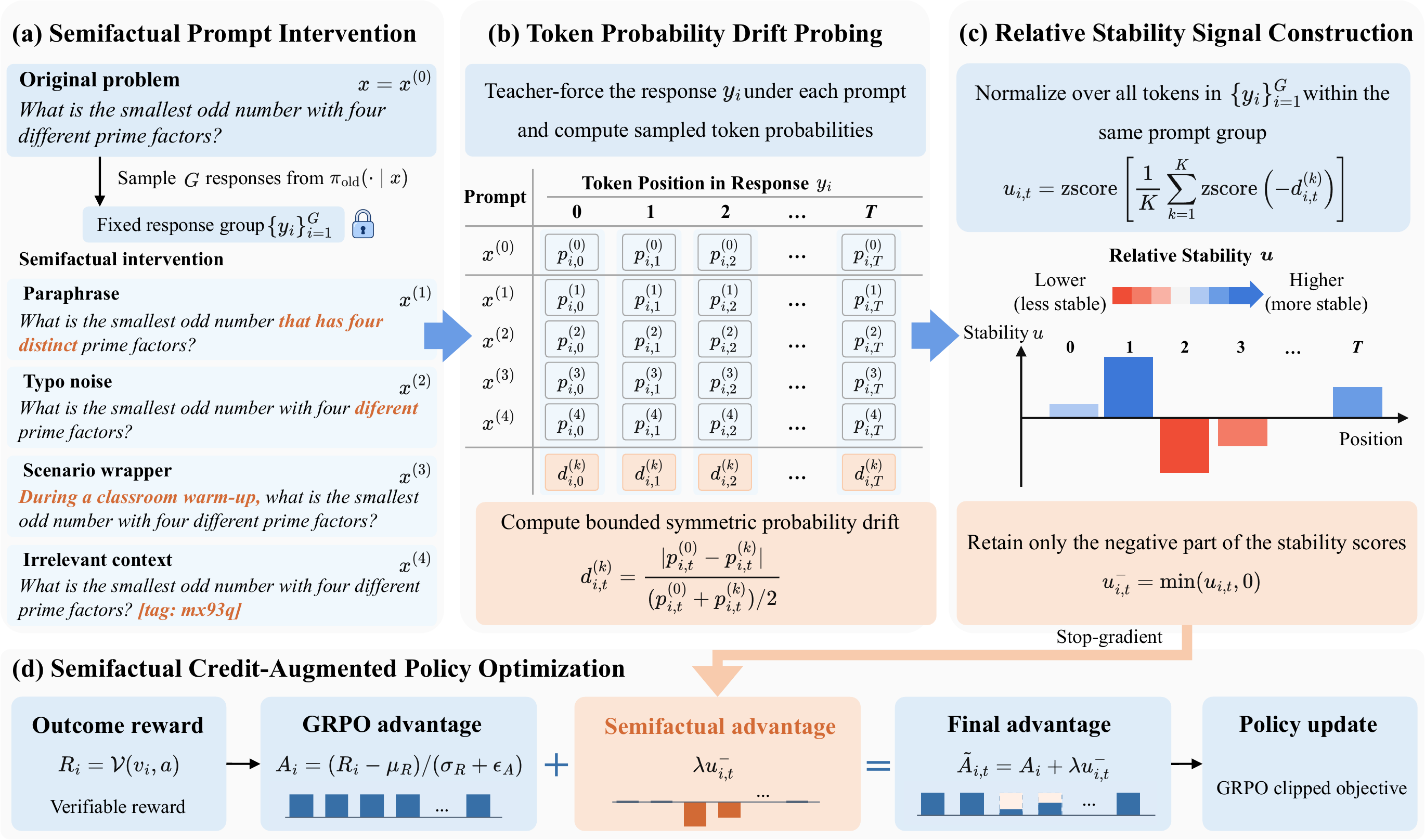}
\caption{Overview of \method{}. (a) Semifactual prompt interventions are paired with a fixed group of sampled responses. (b) Teacher forcing measures sampled-token probability drift under each intervention. (c) Group-wise normalization and aggregation yield relative stability scores, retaining only their negative part. (d) The scaled stability correction, with gradients stopped, is added to the GRPO advantage to produce token-level advantages for policy optimization.}
\label{fig:semifactual-stability}
\end{figure}

\subsection{Preliminaries: Group Relative Policy Optimization}
\label{sec:grpo-preliminaries}

Given a problem--answer pair $(x,a)\sim\mathcal D$, the rollout policy $\pi_{\mathrm{old}}$ samples a group of $G$ responses $\{y_i\}_{i=1}^G$ under the original prompt $x$, where $y_i=(y_{i,1},\ldots,y_{i,T_i})$ denotes the $i$-th response.
Each response receives a verifiable outcome reward $R_i=\mathcal V(y_i,a)\in\{0,1\}$.
GRPO constructs a trajectory-level advantage from the relative rewards within the group~\citep{shao2024deepseekmath}:
\begin{equation}
    A_i=\frac{R_i-\mu_R}{\sigma_R+\epsilon_A},
    \qquad i=1,\ldots,G,
\end{equation}
where $\mu_R$ and $\sigma_R$ are the mean and standard deviation of the group rewards, and $\epsilon_A>0$ is a numerical stabilizer.
The same $A_i$ is applied to every valid token in $y_i$, so it cannot distinguish token-level dependence on task-irrelevant prompt features.

For sampled token $y_{i,t}$, the importance ratio between the current policy $\pi_\theta$ and the rollout policy, together with its clipped counterpart, is
\begin{equation}
\begin{aligned}
    r_{i,t}(\theta)
    &=\frac{\pi_\theta(y_{i,t}\mid x,y_{i,<t})}
    {\pi_{\mathrm{old}}(y_{i,t}\mid x,y_{i,<t})},\\
    \bar r_{i,t}(\theta)
    &=\operatorname{clip}\!\left(
    r_{i,t}(\theta),1-\epsilon_{\mathrm{low}},1+\epsilon_{\mathrm{high}}
    \right).
\end{aligned}
\end{equation}
Following the token-mean formulation~\citep{yu2025dapo}, we update the policy by maximizing the clipped token-level policy gradient objective:
\begin{equation}
\label{eq:grpo-objective}
\mathcal J_{\mathrm{GRPO}}(\theta)
=\mathbb E\!\left[
\frac{1}{\sum_{i=1}^{G}T_i}
\sum_{i=1}^{G}\sum_{t=1}^{T_i}
\min\!\left(r_{i,t}A_i,\bar r_{i,t}A_i\right)
\right].
\end{equation}

\subsection{Fixed-response semifactual probing}
\label{sec:semifactual-stability}

To obtain token-level credit signals, we examine how sampled-token probabilities change under semifactual prompt interventions while holding the response fixed.
For each original prompt $x=x^{(0)}$, we construct $K$ perturbed prompts $x^{(1)},\ldots,x^{(K)}$ using the semifactual perturbations described in Section~\ref{sec:motivating-observation}.
As shown in Figure~\ref{fig:semifactual-stability}(a), the $G$ responses sampled under the original prompt are held fixed across interventions.

We teacher-force the same response under each prompt, keeping both the sampled token $y_{i,t}$ and its response prefix $y_{i,<t}$ unchanged:
\begin{equation}
    p^{(k)}_{i,t}
    =\pi_{\mathrm{old}}\!\left(
    y_{i,t}\mid x^{(k)},y_{i,<t}
    \right),
    \qquad k=0,\ldots,K.
\end{equation}
The rollout policy remains fixed during these probes and is updated as training proceeds.

Using the bounded-symmetric distance from Section~\ref{sec:motivating-observation}, we convert these aligned probabilities into token-level drift, as illustrated in Figure~\ref{fig:semifactual-stability}(b):
\begin{equation}
    d^{(k)}_{i,t}
    =\frac{\left|p^{(0)}_{i,t}-p^{(k)}_{i,t}\right|}
    {\left(p^{(0)}_{i,t}+p^{(k)}_{i,t}\right)/2},
    \qquad k=1,\ldots,K.
\end{equation}
A larger $d^{(k)}_{i,t}$ indicates greater sensitivity of the sampled token's probability to the semifactual prompt intervention.

\subsection{Group-relative stability estimation}
\label{sec:group-relative-stability}

We convert token-level drift into a relative stability signal within each prompt group.
Let $\bm d^{(k)}$ collect the drifts at all valid token positions across the group's $G$ responses for perturbation type $k$.
For any vector $\bm v$ over the valid token positions, define the group-wise standardization
\[
    \operatorname{zscore}_x(\bm v)
    =\frac{\bm v-\mu_x(\bm v)}
    {\sigma_x(\bm v)+\epsilon},
\]
where the mean and standard deviation are computed over all valid response tokens in the group.

As shown in Figure~\ref{fig:semifactual-stability}(c), we standardize negative drift separately for each perturbation type, average the resulting scores, and normalize the aggregate:
\begin{equation}
\label{eq:semifactual-stability}
    \bm u
    =\operatorname{zscore}_x\!\left(
    \frac{1}{K}\sum_{k=1}^{K}
    \operatorname{zscore}_x\!\left(-\bm d^{(k)}\right)
    \right).
\end{equation}
The resulting $u_{i,t}$ measures within-group relative stability, with negative values identifying relatively unstable tokens.
Since stability alone does not imply correctness, we retain only the negative part of the signal:
\begin{equation}
\label{eq:negative-stability}
    u^-_{i,t}=\min(u_{i,t},0).
\end{equation}

\subsection{Policy optimization with semifactual credit}
\label{sec:one-sided-augmentation}

We add the semifactual credit signal to the original GRPO advantage, enabling tokens with the same outcome reward to receive different learning signals.
As shown in Figure~\ref{fig:semifactual-stability}(d), the resulting token-level advantage is
\begin{equation}
\label{eq:modified-advantage}
    \widetilde A_{i,t}
    =A_i+\lambda\,\operatorname{sg}\!\left(u^-_{i,t}\right),
\end{equation}
where $\lambda\ge0$ controls the augmentation strength and $\operatorname{sg}$ denotes stop-gradient.
The semifactual correction is held fixed during each policy update.

This construction ensures $\widetilde A_{i,t}\le A_i$, with equality when $u_{i,t}\ge0$.
When $A_i>0$, the stability correction can weaken the positive reinforcement signal, and when $A_i<0$, it strengthens the negative learning signal.
The method thus refines outcome credit without treating semifactual stability as a token-level correctness label.

Substituting $\widetilde A_{i,t}$ into Eq.~\ref{eq:grpo-objective} gives \method{}'s policy objective:
\begin{equation}
\label{eq:scapo-objective}
\mathcal J_{\text{\method{}}}(\theta)
=\mathbb E\!\left[
\frac{1}{\sum_{i=1}^{G}T_i}
\sum_{i=1}^{G}\sum_{t=1}^{T_i}
\min\!\left(
r_{i,t}\widetilde A_{i,t},
\bar r_{i,t}\widetilde A_{i,t}
\right)
\right].
\end{equation}
\method{} modifies only the advantage, leaving GRPO's importance ratio, clipping, and loss reduction unchanged.
To guide early trajectory selection, we apply semifactual credit augmentation during an initial phase and then continue with standard GRPO.
Specifically, at policy optimization step $n$, the coefficient is
\begin{equation}
\label{eq:lambda-schedule}
\lambda=
\begin{cases}
\lambda_0, & 1\le n\le N_0,\\
0, & n>N_0,
\end{cases}
\end{equation}
where $\lambda_0$ and $N_0$ control the strength and duration of credit augmentation.
Appendix~\ref{sec:optimization-analysis} provides an analysis under a stochastic-gradient model.

\section{Experiments}
\label{sec:experiments}

\subsection{Setup}
\label{sec:experimental-setup}

\paragraph{Datasets.}
We train on DAPO-Math-17K~\citep{yu2025dapo}, a curated dataset of approximately 17,000 competition-level mathematics problems with integer answers.
For \method{}, each training prompt is paired with $K=4$ precomputed perturbed prompts, corresponding to the four perturbation types introduced in Section~\ref{sec:motivating-observation}.

\paragraph{Training.}
We initialize from Qwen3-4B-Base and Qwen3-1.7B-Base~\citep{yang2025qwen3} and train directly with RL for 600 and 1,000 policy optimization steps, respectively.
We use a rollout batch size of 128, an update batch size of 64, and 8 rollouts per prompt.
The maximum response length is 16,384 tokens.
We use the R1-style prompt template, binary rule-based rewards, and a constant learning rate of $10^{-6}$.
For \method{}, we set $\lambda_0=0.01$ and apply credit augmentation for the first $N_0=120$ and $200$ policy optimization steps on the 4B and 1.7B models, respectively.
More detailed training hyperparameters are provided in Appendix~\ref{sec:training-hyperparameters}.

\paragraph{Evaluation.}
We evaluate on in-distribution competition-level mathematical reasoning benchmarks, including AIME 2024--2026~\citep{maaAime}, AMC 2023--2025~\citep{maaAmc}, HMMT 2025--2026~\citep{hmmtArchive}, BRUMO 2025~\citep{brumo2025}, SMT 2025~\citep{smt2025}, Omni-Math~\citep{gao2025omni}, Minerva~\citep{lewkowycz2022solving}, and OlympiadBench~\citep{he2024olympiadbench}.
To assess out-of-distribution generalization, we evaluate NoOp-style distractor variants of AIME~\citep{mirzadeh2025gsm} and ThinkBench perturbations of AIME~\citep{huang2025thinkbench} for robustness to input perturbations.
We further evaluate GPQA-Diamond~\citep{rein2024gpqa}, a graduate-level science question-answering benchmark, to examine transfer beyond mathematics.
All main results use the final checkpoints with identical generation and scoring settings across methods within each benchmark.
We sample at temperature 0.7 and top-$p=0.9$ with a maximum response length of 16,384 tokens, score mathematical responses with Math-Verify~\citep{kydlicek2025mathverify}, and report average accuracy across sampled responses.
Further evaluation details are provided in Appendix~\ref{sec:evaluation-protocol}.

\paragraph{Baselines.}
We consider the following representative RLVR methods:
(1) \textit{GRPO}~\citep{shao2024deepseekmath}: computes group-relative advantages from outcome rewards and applies token-level clipped policy updates.
(2) \textit{GSPO}~\citep{zheng2025gspo}: uses sequence-level importance ratios and clipping for policy optimization.
(3) \textit{SAPO}~\citep{gao2025soft}: replaces hard clipping with smooth, advantage-dependent gating.
(4) \textit{CF-GRPO}~\citep{khandoga2026beyond}: estimates token-level credit through counterfactual masking of reasoning spans.
(5) \textit{FIPO}~\citep{ma2026fipo}: reweights token advantages using discounted future KL.
At each model scale, comparisons match the training data, reward function, prompt template, optimizer, rollout settings, training-step budget, and evaluation protocol.
Method-specific hyperparameters are provided in Appendix~\ref{sec:training-hyperparameters}.

\subsection{Main Results}
\label{sec:experimental-results}

\paragraph{Mathematical reasoning.}
Table~\ref{tab:main-results} shows that \method{} improves over GRPO on all eight mathematical reasoning metrics at both model scales.
AIME accuracy increases by 5.63 and 4.17 percentage points on the 4B and 1.7B models, respectively, while HMMT improves by 4.46 and 2.98 points.
\method{} achieves the highest accuracy on six of eight metrics at 4B and all eight at 1.7B, with the highest average accuracy at both scales.
Additional results show that \method{} achieves the highest Pass@128 on AIME and AMC at both model scales (see Appendix~\ref{sec:passk-results}).

\begin{table}[t]
\caption{Overall accuracy on competition-level mathematical reasoning and out-of-distribution benchmarks (\%). Base denotes the model before RL. AIME variants aggregate 2024–2026, while HMMT and AMC aggregate 2025–2026 and 2023–2025, respectively. Year-specific results are provided in Appendix~\ref{sec:detailed-eval}. Bold and \underline{underline} indicate the best and second-best results, respectively.}
\label{tab:main-results}
\label{tab:robust-results}
\centering
\fontsize{9}{10.3}\selectfont
\renewcommand{\arraystretch}{1.05}
\setlength{\aboverulesep}{1.5pt}
\setlength{\belowrulesep}{1.5pt}
\colorlet{TableGroup}{TableGroup!65!white}
\colorlet{TableMethod}{TableAvg!65!white}
\setlength{\tabcolsep}{2.5pt}
\begin{tabular*}{\textwidth}{@{\extracolsep{\fill}}clrrrrrr>{\columncolor{TableMethod}}r@{}}
\toprule
& \textbf{Benchmark} & \textbf{Base} & \textbf{GRPO} & \textbf{GSPO} & \textbf{SAPO} & \textbf{CF-GRPO} & \textbf{FIPO} & \textbf{\method{}} \\
\midrule
\rowcolor{TableGroup}\multicolumn{9}{c}{\textbf{Qwen3-4B-Base}} \\
\multirow{9}{*}{\rotatebox{90}{In-Distribution}} & AIME 24--26 & 7.64 & 22.08 & 23.47 & 24.03 & 22.64 & \second{25.42} & \best{27.71} \\
& AMC 23--25 & 37.40 & 59.74 & 64.71 & \second{65.45} & 62.06 & 64.57 & \best{65.65} \\
& HMMT 25--26 & 2.08 & 11.71 & 13.69 & 15.18 & 14.09 & \second{15.28} & \best{16.17} \\
& BRUMO 25 & 18.54 & 28.96 & 33.75 & 33.54 & 32.08 & \second{35.00} & \best{37.71} \\
& SMT 25 & 12.74 & 27.12 & 26.06 & 27.12 & 26.77 & \second{28.18} & \best{30.31} \\
& Omni-Math & 26.48 & 35.45 & \second{38.71} & 38.18 & 36.41 & 38.60 & \best{39.42} \\
& Minerva & 31.62 & 40.07 & 39.71 & 41.18 & 41.54 & \best{44.49} & \second{43.38} \\
& Olympiad & 38.58 & 54.30 & 55.64 & 54.75 & 54.75 & \best{57.57} & \second{56.82} \\
& Avg. & 21.88 & 34.93 & 36.97 & 37.43 & 36.29 & \second{38.64} & \best{39.65} \\
\midrule
\multirow{4}{*}{\rotatebox{90}{\fontsize{7.5}{8.5}\selectfont\shortstack{Out-of-\\Distribution}}} & NoOp-AIME 24--26 & 6.74 & 20.56 & 20.14 & 20.42 & \second{21.74} & \second{21.74} & \best{23.61} \\
& ThinkBench-AIME 24--26 & 6.53 & 22.64 & 22.08 & 23.47 & 23.54 & \second{23.89} & \best{26.11} \\
& GPQA-Diamond & 33.78 & 36.26 & \second{41.92} & 41.16 & 36.51 & 40.15 & \best{42.45} \\
& Avg. & 15.68 & 26.48 & 28.05 & 28.35 & 27.26 & \second{28.59} & \best{30.72} \\
\midrule
\rowcolor{TableGroup}\multicolumn{9}{c}{\textbf{Qwen3-1.7B-Base}} \\
\multirow{9}{*}{\rotatebox{90}{In-Distribution}} & AIME 24--26 & 3.61 & 7.71 & 7.36 & \second{10.00} & 6.74 & 7.01 & \best{11.88} \\
& AMC 23--25 & 23.33 & 32.97 & 32.82 & \second{38.58} & 31.99 & 32.48 & \best{41.39} \\
& HMMT 25--26 & 0.69 & 1.98 & 2.08 & \second{4.66} & 2.98 & 1.49 & \best{4.96} \\
& BRUMO 25 & 6.88 & 13.54 & 11.88 & \second{17.71} & 14.37 & 10.00 & \best{19.17} \\
& SMT 25 & 4.83 & 6.96 & 6.84 & \second{10.14} & 8.02 & 8.14 & \best{10.85} \\
& Omni-Math & 16.73 & 20.45 & 21.48 & \second{22.62} & 20.77 & 21.77 & \best{24.74} \\
& Minerva & 21.69 & 27.94 & 27.21 & \second{30.15} & 28.31 & 28.31 & \best{30.88} \\
& Olympiad & 21.81 & 31.31 & 33.68 & \second{37.83} & 33.98 & 32.64 & \best{40.80} \\
& Avg. & 12.45 & 17.86 & 17.92 & \second{21.46} & 18.39 & 17.73 & \best{23.08} \\
\midrule
\multirow{4}{*}{\rotatebox{90}{\fontsize{7.5}{8.5}\selectfont\shortstack{Out-of-\\Distribution}}} & NoOp-AIME 24--26 & 2.36 & 6.11 & 5.97 & \second{8.06} & 6.60 & 6.60 & \best{8.40} \\
& ThinkBench-AIME 24--26 & 3.19 & 7.57 & 7.22 & \second{9.58} & 6.94 & 7.01 & \best{12.01} \\
& GPQA-Diamond & 22.37 & 27.42 & 29.21 & \second{30.96} & 27.61 & 28.24 & \best{31.25} \\
& Avg. & 9.31 & 13.70 & 14.13 & \second{16.20} & 13.72 & 13.95 & \best{17.22} \\
\bottomrule
\end{tabular*}
\end{table}

\paragraph{Out-of-distribution generalization.}
\method{} achieves the highest accuracy on all three out-of-distribution benchmarks at both model scales (Table~\ref{tab:robust-results}).
On GPQA-Diamond, accuracy improves over GRPO from 36.26\% to 42.45\% at 4B and from 27.42\% to 31.25\% at 1.7B, gains of 6.19 and 3.83 percentage points, respectively.
Together with the improvements on NoOp-AIME and ThinkBench-AIME, these results show that the gains extend to both perturbed mathematical problems and scientific reasoning beyond the training domain.

\subsection{Ablation Studies}
\label{sec:ablation-studies}

We ablate the source and sign of token-level credit to examine their roles in \method{}'s performance gains, and the results are shown in Table~\ref{tab:credit-ablation}.

\begin{wraptable}{r}{0.60\textwidth}
\vspace{\dimexpr-\intextsep+6pt\relax}
\centering
\setlength{\abovecaptionskip}{0pt}
\caption{Ablations on Qwen3-1.7B-Base. \method{} uses negative-only corrections.}
\label{tab:credit-ablation}
\fontsize{8}{9.2}\selectfont
\renewcommand{\arraystretch}{1.0}
\setlength{\tabcolsep}{1.5pt}
\setlength{\aboverulesep}{2pt}
\setlength{\belowrulesep}{2pt}
\begin{tabular*}{\linewidth}{@{\extracolsep{\fill}}lrrrrrr@{}}
\toprule
\textbf{Method} & \shortstack{\textbf{AIME}\\\textbf{24--26}} & \shortstack{\textbf{AMC}\\\textbf{23--25}} & \shortstack{\textbf{HMMT}\\\textbf{25--26}} & \shortstack{\textbf{BRUMO}\\\textbf{25}} & \shortstack{\textbf{SMT}\\\textbf{25}} & \textbf{Avg.} \\
\midrule
GRPO & 7.71 & 32.97 & 1.98 & 13.54 & 6.96 & 12.63 \\
\midrule
\textbf{\method{}} & \best{11.88} & \best{41.39} & \best{4.96} & \best{19.17} & \best{10.85} & \best{17.65} \\
\addlinespace[2pt]
\multicolumn{7}{@{}l}{\textit{Credit signal source}} \\
\quad Random shuffle & 8.33 & 32.78 & 2.28 & \second{15.42} & 9.43 & 13.65 \\
\quad Counterfactual & 6.32 & 30.51 & 1.69 & 12.92 & 6.96 & 11.68 \\
\addlinespace[2pt]
\multicolumn{7}{@{}l}{\textit{Correction sign}} \\
\quad All-sign & \second{10.76} & \second{38.83} & \second{3.77} & 15.21 & \second{9.67} & \second{15.65} \\
\quad Positive-only & 9.79 & 34.10 & 2.58 & \second{15.42} & 8.37 & 14.05 \\
\bottomrule
\end{tabular*}
\end{wraptable}

\paragraph{Source of the credit signal.}
The random shuffle ablation permutes semifactual corrections within each response, preserving their values but breaking the original token alignment.
The counterfactual ablation replaces answer-preserving semifactual prompts with answer-changing perturbations, while keeping the credit construction unchanged.
\method{} performs best on the five competition-level benchmarks, followed consistently by random shuffle and the counterfactual control.
On AIME 24--26, \method{} leads these controls by 3.55 and 5.56 percentage points, respectively.
These results support aligning credit with semifactual sensitivity, since drift under answer-changing perturbations may reflect legitimate changes in token predictions rather than spurious dependence.

\paragraph{Sign of the credit correction.}
We compare negative-only, all-sign, and positive-only corrections by retaining negative, all, or positive stability scores.
\method{}'s negative-only correction achieves the highest average accuracy across the five competition-level benchmarks.
This supports reducing credit for relatively unstable tokens without rewarding stability alone.

\subsection{Computational Efficiency}
\label{sec:mechanistic-analysis}

\begin{wrapfigure}{r}{0.43\textwidth}
    \vspace{\dimexpr-\intextsep-18pt\relax}
    \setlength{\abovecaptionskip}{4pt}
    \centering
    \includegraphics[width=\linewidth]{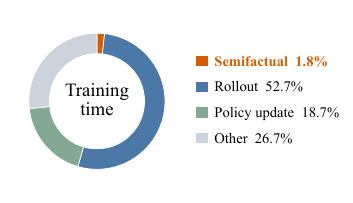}
    \caption{Training time breakdown of \method{} on Qwen3-4B-Base. Other includes validation, reward computation, old policy log-probability computation, and miscellaneous overhead.}
    \label{fig:runtime-breakdown}
\end{wrapfigure}
As shown in Figure~\ref{fig:runtime-breakdown}, semifactual probing and credit construction account for only 1.8\% of the measured training runtime in the 4B experiment, far less than autoregressive rollout.
The probes reuse sampled responses through teacher forcing, avoiding fresh generation under each perturbed prompt.
They are also confined to the initial credit augmentation phase and incur no further cost during subsequent optimization.
Together, these properties keep probing a small fraction of total runtime despite evaluating multiple prompts.
Training efficiency is therefore more strongly affected by major stages such as autoregressive generation, whose cost depends on response length.

\section{Related Work}

\paragraph{Reinforcement learning with verifiable rewards.}
RLVR has advanced mathematical reasoning by optimizing automatically verifiable outcomes, with critic-free group-relative methods enabling direct RL training of base models~\citep{shao2024deepseekmath,guo2025deepseek,zeng2025simplerl}.
Subsequent work improves normalization and sampling~\citep{liu2025understanding,yu2025dapo}, adapts importance weighting and clipping~\citep{zheng2025gspo,gao2025soft}, and promotes exploration through entropy control or off-policy guidance~\citep{cui2025entropy,yan2025learning}.
However, response-level advantages remain too coarse to distinguish individual reasoning tokens.
Studies of reasoning-path coverage further distinguish improved sampling efficiency from expansion of a model's reasoning repertoire~\citep{yue2025does}.
Like prior RLVR methods, our approach optimizes verifiable outcome rewards, but leverages semifactual stability to refine credit assignment at the token level.

\paragraph{Credit assignment for reasoning.}
Fine-grained feedback can be learned from process annotations or outcome labels~\citep{lightman2024let,wang2024mathshepherd,cui2026process}, or estimated through Monte Carlo rollouts~\citep{kazemnejad2025vineppo}.
Token entropy, confidence, eligibility traces, and future policy divergence also guide token selection and weighting~\citep{wang2025beyond,xie2026unlocking,mou2026beyond,ma2026fipo}.
Perturbation- and gradient-based attribution methods estimate how reasoning tokens or spans contribute to final-answer predictions, providing signals for selective supervised fine-tuning and importance-weighted policy updates~\citep{ruan2025enhancing,khandoga2026beyond,li2026outcome}.
Like prior work on fine-grained credit assignment, we use token-level signals to guide policy updates.
Our contribution is to derive negative-only advantage corrections from the sensitivity of a fixed response to semifactual prompt interventions, using teacher-forced token probabilities rather than output-side attribution through span masking or continuation regeneration.

\section{Conclusion}

We presented \method{}, a causally inspired approach for finer-grained token credit assignment in RLVR.
By probing potential token-level spurious dependence through fixed-response sensitivity to semifactual prompt interventions, \method{} applies negative-only corrections to GRPO advantages, selectively reducing credit for relatively unstable tokens.
Across Qwen3-4B-Base and Qwen3-1.7B-Base, \method{} outperforms GRPO on all reported benchmark metrics and achieves the best results on most benchmarks among the compared RLVR methods.
These results highlight that semifactual interventions can serve not only as diagnostic probes of brittle reasoning but also as training signals that complement outcome rewards.
Future work will explore this credit augmentation in larger models, more diverse architectures, and domains beyond mathematical reasoning.

\subsection*{AI Use Statement}
We used generative AI to generate semifactual perturbation data, polish writing, correct grammar, and assist with code debugging and script development.
The authors take responsibility for the final content of this work.

\subsection*{Reproducibility Statement}
Our work is easy to reproduce.
The original datasets and pretrained models used in our experiments are publicly available, and detailed experimental hyperparameters are provided in Appendix~\ref{sec:additional-experimental-details}.
We provide our code, developed on top of the open-source EasyR1 framework, together with all training and evaluation data in the Github repo.

\bibliography{references}
\bibliographystyle{iclr2027_conference}

\appendix
\raggedbottom

\section{Limitations}
\label{sec:limitations}
Our training experiments are limited to mathematical reasoning with dense Qwen3 models of 1.7B and 4B parameters trained on DAPO-Math-17K.
Although GPQA-Diamond provides evidence of transfer to scientific reasoning, the effectiveness of \method{} at larger model and data scales and with broader training domains remains to be evaluated.
Future work should also examine other architectures, including mixture-of-experts (MoE) and hybrid models.

\section{Semifactual Perturbations and Token-Level Analysis}
\label{sec:diagnostic-definitions}

\subsection{Perturbation construction}
\label{sec:perturbation-construction}

We request GPT-5.5~\citep{openai2026gpt55} to generate four perturbed prompts per DAPO-Math-17K problem, using the full system and user prompts in Table~\ref{tab:semifactual-view-definitions}.

\captionof{table}{Full prompts used to generate semifactual perturbations. The user template is filled with the original problem and reference answer.}
\label{tab:semifactual-view-definitions}
\begin{tcolorbox}[
    breakable, colback=PromptBoxBody, colframe=PromptBoxBorder,
    colbacktitle=PromptBoxHeader, coltitle=white,
    title={System prompt}, title after break={System prompt (continued)},
    fonttitle=\footnotesize\bfseries,
    boxrule=0.5pt, arc=1mm, boxsep=0pt,
    left=5pt, right=5pt, top=2pt, bottom=2pt,
    toptitle=1.5pt, bottomtitle=1.5pt,
    before skip=4pt, after skip=4pt
]
\begin{Verbatim}[fontsize={\fontsize{8}{9}\selectfont},breaklines,breakanywhere,breaksymbolleft={},breaksymbolright={}]
Given a math problem and its original answer, produce exactly 4 prompt perturbations.

Core requirements:
- Preserve the same mathematical problem.
- Preserve all numeric values exactly as written.
- Preserve all constraints, operations, requested quantity, and the original answer.
- Produce exactly one perturbation for each perturbation_type: paraphrase, typo_noise, scenario_wrap, irrelevant_context.
- Do not add any condition that helps solve the problem.
- Do not solve the problem.
- Do not reveal reasoning.
- Return JSON only.

Math and LaTeX preservation:
- Treat all math expressions as frozen text. This includes text inside $...$, $$...$$, \(...\), \[...\], and standalone symbolic expressions such as variables, equations, inequalities, fractions, powers, percentages, and units.
- Do not rewrite, reorder, escape, unescape, or modify any LaTeX/math content.
- Preserve every LaTeX command exactly as it appears in the original parsed string: examples include \sin, \angle, \frac, \sqrt, \triangle, \overline.
- In JSON, escape backslashes correctly so that the parsed string contains exactly one backslash before each LaTeX command. Never output doubled LaTeX commands like \\sin, and never output raw control characters such as tab or form feed.
- Do not alter variables, point labels, entity labels, numbers, units, comparison signs, operations, or answer format.

Type-specific requirements:

- paraphrase:
  Rewrite only the natural-language prose that sits outside math expressions. Keep the same scenario, the same entities, the same event and causal order, and every mathematical fact unchanged. Do not introduce a new setting or new objects.

- typo_noise:
  Introduce exactly one character-level edit (insert, delete, substitute, or swap two adjacent characters) in one ordinary English word. The chosen word must NOT be math, LaTeX, a variable, a number, a unit, a proper name, a label, or a mathematical keyword such as "mean", "median", "integer", "prime", "even", "odd". The perturbed text must differ from the original by at least that one non-whitespace character; whitespace-only differences are invalid. The word must still be recognizable to a human reader.

- scenario_wrap:
  Add a light scenario frame around the original problem. This can be a short prefix, a short suffix, or one brief background phrase inserted into the natural-language part of the problem. This should be a small irrelevant context shift, not a full rewrite into a new domain.

  Keep the original people, objects, quantities, units, formulas, variables, constraints, operations, event order, and requested quantity unchanged. Do not replace core entities or items. Do not change "clips" into "packages", "wallet" into "sensor kit", "flowers" into "crates", or a person into a machine.

  Good examples:
    * "At a school craft booth, [original problem]"
    * "During a budgeting exercise, [original problem]"
    * "While tracking a reading plan, [original problem]"
    * "[Original first sentence] This is part of a classroom activity. [remaining original problem]"
    * "[Original problem] This was recorded in a simple daily log."
    * "[Original problem] The setting is an ordinary after-school situation."

  Bad examples:
    * Rewriting the full story into a different domain.
    * Replacing the actors or objects with different actors or objects.
    * Adding hints, extra facts, new conditions, or solution-relevant context.
    * Adding numbers, formulas, variables, quantities, or mathematical facts.

- irrelevant_context:
  Keep the original problem text exactly unchanged. Append exactly one short meaningless distractor at the very end (after the final question mark or full stop).

  Acceptable distractor shapes (pick one; vary the shape and content across samples):
    * A vacuous logical tautology, e.g. "and true is true", "note that 1 = 1", "recall that each thing equals itself".
    * A short random alphanumeric string of length 6 to 12 characters, e.g. "5XeflW1ZJc", "qP7mz39a".
    * A bracketed decorative tag, e.g. "[tag: mx93q]", "[marker-a7f]".

  The distractor MUST NOT contain:
    * Any mathematical fact, quantity, variable, unit, equation, or operator that could interact with the problem.
    * Any hint about the problem, the subject area, or the solution approach.
    * Any statement about the characters, objects, or scenario that appear in the problem.
    * Any framing or meta-language such as "question", "problem", "exercise", "consider", "observe", "for practice", "simple", "word problem".

Return this JSON shape:
{
  "perturbations": [
    {
      "perturbed_question": "...",
      "perturbation_type": "paraphrase | typo_noise | scenario_wrap | irrelevant_context"
    }
  ]
}
\end{Verbatim}
\end{tcolorbox}
\begin{tcolorbox}[
    colback=PromptBoxBody, colframe=PromptBoxBorder,
    colbacktitle=PromptBoxHeader, coltitle=white,
    title={User prompt}, fonttitle=\footnotesize\bfseries,
    boxrule=0.5pt, arc=1mm, boxsep=0pt,
    left=5pt, right=5pt, top=2pt, bottom=2pt,
    toptitle=1.5pt, bottomtitle=1.5pt,
    before skip=4pt, after skip=4pt
]
\begin{Verbatim}[fontsize={\fontsize{8}{9}\selectfont},breaklines]
Original problem:
{problem}

Original answer:
{answer}
\end{Verbatim}
\end{tcolorbox}

\begin{tcolorbox}[
    colback=PromptBoxBody, colframe=PromptBoxBorder,
    colbacktitle=PromptBoxHeader, coltitle=white,
    title={Example: Four semifactual perturbations},
    fonttitle=\small\bfseries, fontupper=\small,
    boxrule=0.5pt, arc=1.5mm, boxsep=0pt,
    left=6pt, right=6pt, top=4pt, bottom=4pt,
    toptitle=3pt, bottomtitle=3pt,
    before skip=6pt, after skip=6pt
]
\textbf{Original:} What is the smallest odd number with four different prime factors?\par\smallskip
\textbf{Paraphrase:} What is the smallest odd number \textcolor{PromptBoxHeader!55!black}{\textbf{that has four distinct}} prime factors?\par\smallskip
\textbf{Typo noise:} What is the smallest odd number with four \textcolor{PromptBoxHeader!55!black}{\textbf{diferent}} prime factors?\par\smallskip
\textbf{Scenario wrapper:} \textcolor{PromptBoxHeader!55!black}{\textbf{During a classroom warm-up,}} what is the smallest odd number with four different prime factors?\par\smallskip
\textbf{Irrelevant context:} What is the smallest odd number with four different prime factors? \textcolor{PromptBoxHeader!55!black}{\textbf{[tag: mx93q]}}\par\smallskip
\textbf{Answer:} $1155$
\end{tcolorbox}

\subsection{Diagnostic setup}
\label{sec:diagnostic-sampling}

We randomly select 1,000 DAPO-Math-17K questions and their four perturbed prompts to form a fixed diagnostic subset dataset.

Frozen Qwen3-4B-Base generates one response per original prompt, with temperature 1.0, top-$p=1.0$, and a limit of 8,192 response tokens.
We then teacher-force the same response tokens under the original prompt and all four perturbed prompts.
Each response token contributes one mean drift $d_t$ across the four perturbations, yielding 870,586 original response token drifts.

\subsection{Token taxonomy and statistics}
\label{sec:token-taxonomy}

We assign each decoded response token to one of the eight mutually exclusive categories in Table~\ref{tab:token-category-statistics}.
Reflection markers are identified using a 16-entry lexicon adapted from prior work~\citep{wang2025wait} and our generation traces.
Discourse connectives are identified using 80 single-part DiMLex-Eng forms that contain no whitespace~\citep{das2018constructing}.
Lexical matching ignores case, surrounding whitespace, and edge punctuation, with reflection markers taking precedence over discourse connectives.

\begin{table}[H]
\centering
\begin{threeparttable}
\caption{Token categories and drift statistics underlying Figure~\ref{fig:motivating-instability}(c).
\textit{Share} is the percentage of the 870,586 response token positions assigned to each category. \textit{Mean} is the category's average drift $d_t$. \textit{Relative mean} is this average divided by the overall mean (0.01640).
\tokspace{} denotes a space and \texttt{\textbackslash n} a newline.}
\label{tab:token-category-statistics}
\footnotesize
\setlength{\tabcolsep}{2.2pt}
\renewcommand{\arraystretch}{1.15}
\begin{tabular*}{\linewidth}{@{\extracolsep{\fill}}>{\raggedright\arraybackslash}p{0.155\linewidth}>{\raggedright\arraybackslash}p{0.275\linewidth}>{\raggedright\arraybackslash}p{0.195\linewidth}rrr@{}}
\toprule
\textbf{Category} & \textbf{Classification rule} & \textbf{Token examples} & \textbf{Share (\%)} & \textbf{Mean} $d_t$ & \textbf{Relative mean} \\
\midrule
Reflection marker\tnote{a} & Matches the reflection lexicon after lexical normalization. & \tokspace\texttt{any}, \tokspace\texttt{check}, \texttt{Now}, \tokspace\texttt{However}, \tokspace\texttt{verify}, \tokspace\texttt{again} & 0.29 & 0.04496 & 2.74 \\
\addlinespace[2pt]
Discourse connective & Matches the 80-form DiMLex-Eng lexicon after normalization. Excludes reflection markers. & \tokspace\texttt{and}, \tokspace\texttt{for}, \tokspace\texttt{if}, \tokspace\texttt{or}, \texttt{Thus}, \tokspace\texttt{Therefore} & 2.68 & 0.03807 & 2.32 \\
\addlinespace[2pt]
Word & Contains a Unicode letter and matches neither lexical category nor the math-symbol rule. & \tokspace\texttt{the}, \tokspace\texttt{of}, \tokspace\texttt{is}, \tokspace\texttt{to}, \tokspace\texttt{we}, \tokspace\texttt{in} & 40.92 & 0.02506 & 1.53 \\
\addlinespace[2pt]
Punctuation & All non-whitespace characters have Unicode category \texttt{P*}. Excludes math symbols. & \texttt{,}\quad\texttt{.}\quad\texttt{:}\quad\texttt{\#}\quad\texttt{),}\quad\texttt{:\textbackslash n} & 6.75 & 0.01956 & 1.19 \\
\addlinespace[2pt]
Math symbol & Begins with \texttt{\textbackslash} or contains only characters from the fixed math/\LaTeX{} symbol set. Excludes numbers. & \tokspace\texttt{\textbackslash}, \tokspace\texttt{=}, \tokspace\texttt{+}, \tokspace\texttt{-}, \texttt{\{}, \texttt{\}} & 26.62 & 0.00768 & 0.47 \\
\addlinespace[2pt]
Number & Matches a signed integer, decimal, or percentage expression. & \texttt{0}, \texttt{1}, \texttt{2}, \texttt{3}, \texttt{4}, \texttt{5} & 15.31 & 0.00612 & 0.37 \\
\addlinespace[2pt]
\mbox{Whitespace-only} & Decodes entirely to Unicode whitespace. & \tokspace{}, \tokspace\tokspace{}, \texttt{\textbackslash n}, \texttt{\textbackslash n\textbackslash n} & 6.71 & 0.00718 & 0.44 \\
\addlinespace[2pt]
Other & Control tokens, undecodable UTF-8 fragments, or remaining forms not covered above. & \texttt{<|endoftext|>}, \texttt{\textasciigrave\textasciigrave\textasciigrave}, \texttt{\textasciigrave\textasciigrave}, \texttt{\$,} & 0.72 & 0.02925 & 1.78 \\
\bottomrule
\end{tabular*}
\begin{tablenotes}[flushleft]
\footnotesize
\item[a] The reflection lexicon contains the following 16 words: \texttt{again}, \texttt{ah}, \texttt{alternative}, \texttt{alternatively}, \texttt{another}, \texttt{any}, \texttt{but}, \texttt{check}, \texttt{hmm}, \texttt{however}, \texttt{maybe}, \texttt{now}, \texttt{oh}, \texttt{other}, \texttt{verify}, \texttt{wait}.
\end{tablenotes}
\end{threeparttable}
\end{table}

\begin{samepage}
\subsection{Complete-response case studies}
\label{sec:complete-token-cases}

Figures~\ref{fig:complete-case-interest} and~\ref{fig:complete-case-change} illustrate two cases with their original and perturbed prompts.
Response tokens are shaded by their mean probability drift across the four prompt perturbations, with darker shading indicating greater sensitivity.
These examples show that even when the final answer is correct, the probabilities of individual response tokens can change substantially under answer-preserving prompt perturbations.

\end{samepage}

\begin{figure}[H]
\centering
\includegraphics[width=\textwidth,height=0.90\textheight,keepaspectratio]{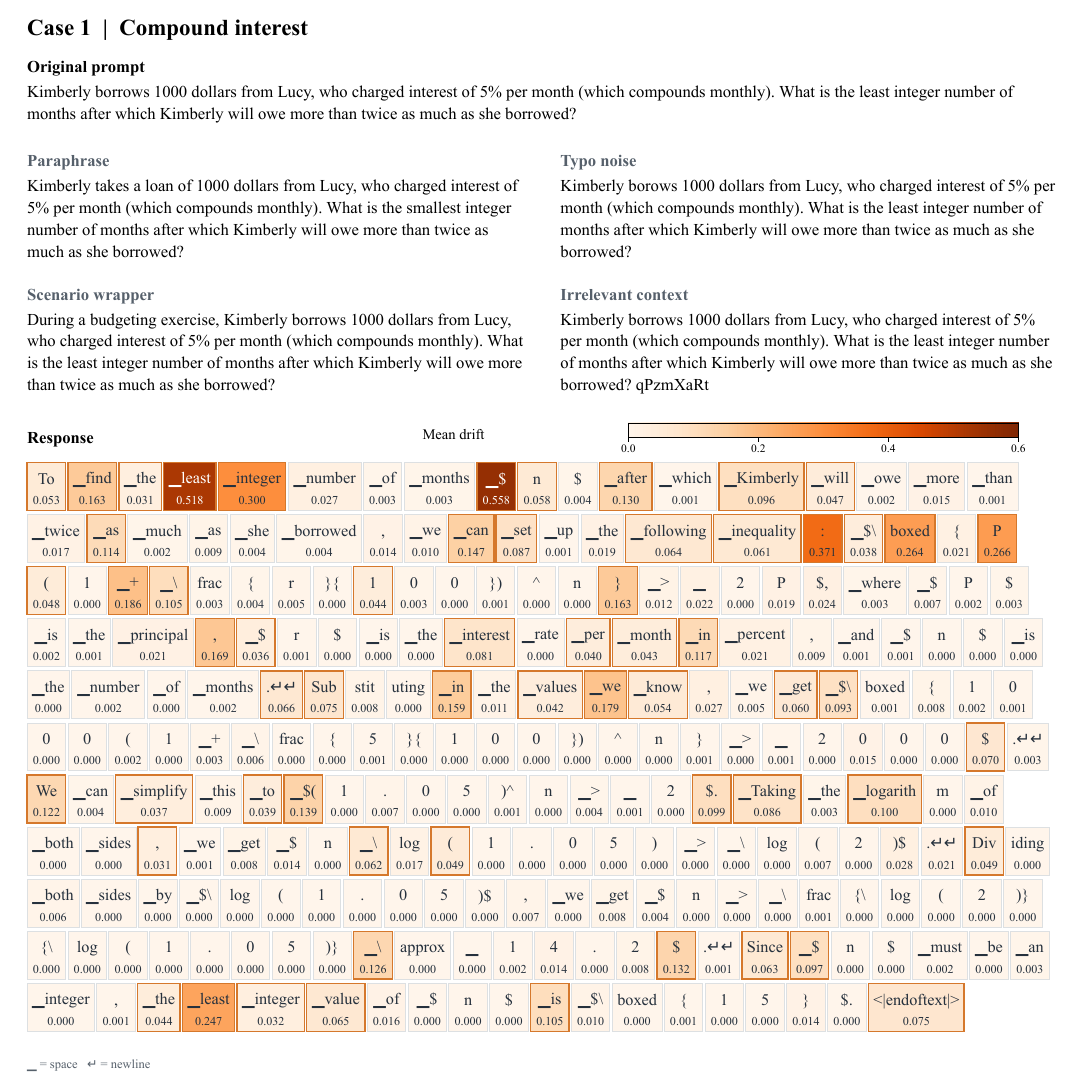}
\caption{Token-level semifactual sensitivity in a compound-interest problem.}
\label{fig:complete-case-interest}
\end{figure}

\begin{figure}[H]
\centering
\includegraphics[width=\textwidth]{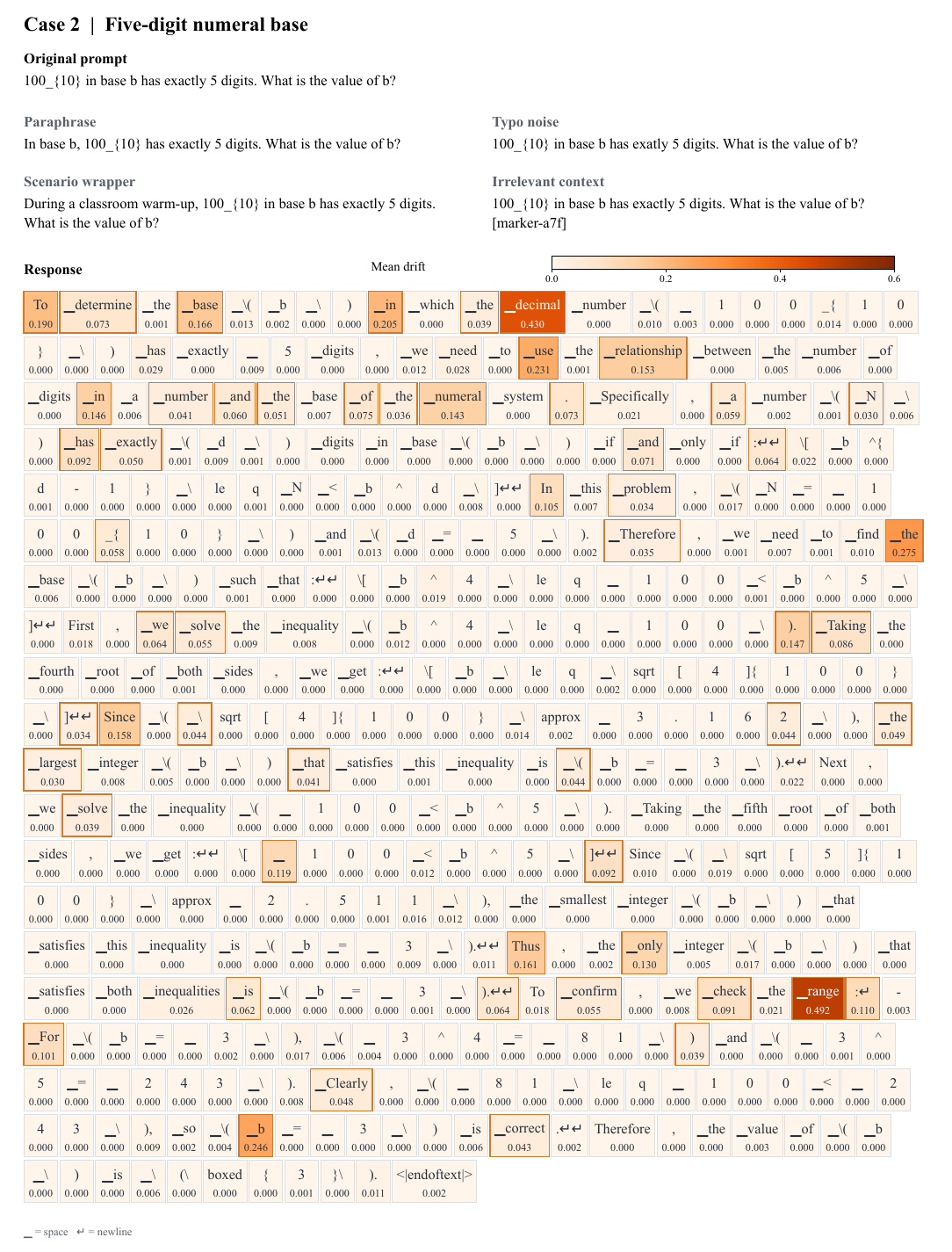}
\caption{Token-level semifactual sensitivity in a numeral-base problem.}
\label{fig:complete-case-change}
\end{figure}

\clearpage
\section{\texorpdfstring{$d$}{d}-Filtered Decoding}
\label{sec:decoding-intervention}

\subsection{Decoding Setup}
\label{sec:decoding-setup}
We use frozen Qwen3-4B-Base on the 1,000-question panel in Appendix~\ref{sec:diagnostic-sampling}.
Both base sampling and $d$-filtered decoding use the same prompt template, temperature of 1.0, top-$p=1.0$, and a maximum response length of 8,192 tokens.
We cap the cumulative clean-policy probability mass of masked tokens at 0.8, which is the sole filtering hyperparameter.
For each question, we independently generate one response under each decoding strategy.

\subsection{Filtering Procedure}
\label{sec:decoding-filtering}
At each decoding step, we compute the mean bounded-symmetric drift of every vocabulary token across the four perturbed prompts.
We sort all token candidates by decreasing drift.
We mask token candidates in this order until adding the next candidate would exceed the mass cap.
The EOS token is always retained.
The cap limits the total probability mass removed, not the fraction of tokens masked.

Writing $p(v)$ for the clean-policy probability at the current prefix and $\mathcal M$ for the masked set, the filtered sampling distribution is
\begin{equation}
    p_{\mathrm{filter}}(v)
    = \frac{p(v)\,\mathbf{1}\{v\notin\mathcal M\}}
           {1-\sum_{w\in\mathcal M}p(w)}.
    \label{eq:diagnostic-filter}
\end{equation}

We sample a proposal from the clean policy and accept it if it is unmasked.
Otherwise, we reject it and resample from the distribution renormalized over unmasked tokens.
Drift and the masked set are recomputed at each step using the current filtered-response prefix.

\subsection{Results}
\label{sec:decoding-results}
As shown in Table~\ref{tab:before-rl-decoding}, filtering yields 202 incorrect-to-correct and 60 correct-to-incorrect transitions, a gain of 14.20 percentage points.
On average, 50.58 token proposals are rejected and resampled per filtered response (4.58\% of decoding steps), with at least one such rejection in each of the 1,000 filtered responses.

\begin{table}[H]
\centering
\caption{Decoding results on 1,000 questions with frozen Qwen3-4B-Base. The lower block reports question counts for the four combinations of baseline and filtered answer correctness.}
\label{tab:before-rl-decoding}
\small
\begin{tabular}{@{}lrr@{}}
\toprule
Metric & Base sampling & $d$-filtered \\
\midrule
Correct answers & 158 & 300 \\
Accuracy (\%) & 15.80 & \textbf{30.00} \\
\midrule
\multicolumn{3}{@{}l}{\textit{Paired correctness outcomes (question counts)}} \\
& Filtered correct & Filtered incorrect \\
Baseline correct & 98 & 60 \\
Baseline incorrect & 202 & 640 \\
\bottomrule
\end{tabular}
\end{table}

\section{Optimization Analysis}
\label{sec:optimization-analysis}

\newtheorem{scapolemma}{Lemma}[section]
\newtheorem{scapotheorem}[scapolemma]{Theorem}

We analyze credit augmentation under a stochastic-gradient model with a fixed objective $F$.

\begin{scapolemma}[Bounded auxiliary gradient]
\label{lem:bounded-augmentation}
Let $\widehat g_\lambda$ and $\widehat g_0$ be the batch gradients of Eqs.~\ref{eq:scapo-objective} and~\ref{eq:grpo-objective} on the same complete prompt groups, with advantages and stability scores held fixed.
If importance ratios are at most $R$ and token log-probability gradients have norm at most $B$, then
\begin{equation}
\label{eq:auxiliary-gradient-bound}
\|\widehat g_\lambda-\widehat g_0\|\le\lambda H,
\qquad H:=RB/2.
\end{equation}
\end{scapolemma}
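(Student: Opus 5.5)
The plan is to compare the two batch gradients token by token. The stop-gradient makes each token's advantage a constant with respect to $\theta$, and the two objectives share the same normalizer $1/\sum_i T_i$, the same ratios $r_{i,t}$, and the same clipping. The token-level difference should therefore be controlled by the advantage difference $\widetilde A_{i,t}-A_i=\lambda u^-_{i,t}$. The constant $1/2$ will come from averaging $|u^-_{i,t}|$ over tokens, not from a pointwise bound.

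\textbf{Step 1 (per-token gradient).} For a fixed advantage $a$, write $\ell_{i,t}=\log\pi_\theta(y_{i,t}\mid x,y_{i,<t})$, so $\nabla r_{i,t}=r_{i,t}\nabla\ell_{i,t}$. The gradient of $\min(r_{i,t}a,\bar r_{i,t}a)$ is $a\,r_{i,t}\nabla\ell_{i,t}\,\mathbf 1_{i,t}(a)$, where $\mathbf 1_{i,t}(a)$ indicates that the unclipped branch is active. When $r_{i,t}$ lies inside $[1-\epsilon_{\mathrm{low}},1+\epsilon_{\mathrm{high}}]$, we have $\bar r=r$, so both branches coincide and the gradient is $a\,r\nabla\ell$ for any sign of $a$. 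Outside this interval the clipped branch is constant. I will then take $\mathbf 1_{i,t}$ to be the same for $a=A_i$ and $a=\widetilde A_{i,t}$; this needs justification, discussed below. Under that assumption the per-token difference is $\lambda u^-_{i,t}\,r_{i,t}\nabla\ell_{i,t}\,\mathbf 1_{i,t}$, with norm at most $\lambda|u^-_{i,t}|RB$.

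\textbf{Step 2 (averaging the negative part).} The triangle inequality gives $\|\widehat g_\lambda-\widehat g_0\|\le\lambda RB\cdot\frac{1}{\sum_i T_i}\sum_{i,t}|u^-_{i,t}|$, and it remains to show this token average is at most $1/2$. Within one prompt group, $\bm u$ is a z-score over the $N$ valid tokens, so it has mean zero and standard deviation $\sigma/(\sigma+\epsilon)\le1$. Mean zero implies $\sum|u^-|=\sum u^+=\tfrac12\sum|u|$. Cauchy--Schwarz gives $\frac1N\sum|u|\le\bigl(\frac1N\sum u^2\bigr)^{1/2}\le1$, hence $\frac1N\sum|u^-_{i,t}|\le 1/2$. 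For a batch of several complete groups under the token-mean reduction, the batch average is a convex combination of the per-group averages, weighted by token counts, so it is also at most $1/2$. This is where the hypothesis of complete prompt groups is used, and it yields $H=RB/2$.

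\textbf{Main obstacle.} The delicate point is the branch indicator in Step 1. If the correction flips the sign of the advantage ($A_i>0$ but $\widetilde A_{i,t}<0$) while $r_{i,t}$ lies outside the clip interval, the active branch of the min changes. The per-token difference then contains a full $A_i$ term rather than a $\lambda u^-$ term. My first approach is to observe that the advantage enters linearly, and to read the "ratios and advantages held fixed" hypothesis as fixing the clipping mask along with them. Concretely, I would define the gradient as $a\,r\nabla\ell\,\mathbf 1_{i,t}$ with the mask inherited from GRPO, which is exactly the situation in the first on-policy minibatch where $r\equiv1$. If this reading is not acceptable, the fallback is to assume $\lambda$ is small enough that $\operatorname{sign}\widetilde A_{i,t}=\operatorname{sign}A_i$ wherever clipping binds. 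Either way, Step 2 goes through unchanged.
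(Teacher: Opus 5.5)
Your Step 2 is exactly the paper's argument: the group z-score has mean zero, Cauchy--Schwarz gives $\frac1N\sum|u^-_{i,t}|=\tfrac12\cdot\frac1N\sum|u_{i,t}|\le\tfrac12$, and token-mean aggregation over complete groups is a convex combination. Step 1 also has the right shape. The gap is in how you close Step 1, and neither of your two resolutions proves the lemma as stated.

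\begin{itemize}
\item Freezing the clipping mask at its GRPO value produces a vector that is no longer the gradient of Eq.~\ref{eq:scapo-objective}. The phrase ``advantages held fixed'' means they are treated as $\theta$-independent constants. It does not mean the active branch of the $\min$ is inherited from GRPO. The on-policy case $r\equiv1$ is also too special, because the lemma is stated for arbitrary ratios up to $R$.
\item Restricting $\lambda$ so that no sign flip occurs where clipping binds adds a hypothesis the lemma does not have.
\end{itemize}

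Your diagnosis that a flipped branch leaves a ``full $A_i$ term'' is also mistaken. This is exactly where the paper's proof does its work.

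The missing observation is that the active regime depends only on where $r_{i,t}$ sits relative to the clip interval. In each regime the coefficient of $\nabla\ell_{i,t}$ is $r_{i,t}\,\phi(a)$, where
\begin{itemize}
\item $\phi(a)=a$ inside the interval,
\item $\phi(a)=\max(a,0)$ when $r_{i,t}<1-\epsilon_{\mathrm{low}}$,
\item $\phi(a)=\min(a,0)$ when $r_{i,t}>1+\epsilon_{\mathrm{high}}$.
\end{itemize}
Each $\phi$ is $1$-Lipschitz, so
\[
\bigl|r_{i,t}\phi(\widetilde A_{i,t})-r_{i,t}\phi(A_i)\bigr|\le r_{i,t}\,|\widetilde A_{i,t}-A_i|=\lambda\,r_{i,t}\,|u^-_{i,t}|
\]
holds in every case, sign flips included.

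Concretely, the two objectives can select different branches only when $A_i\ge0\ge\widetilde A_{i,t}$. Since $A_i-\widetilde A_{i,t}=\lambda|u^-_{i,t}|$, both $|A_i|$ and $|\widetilde A_{i,t}|$ are then at most $\lambda|u^-_{i,t}|$. The surviving term is $A_i r_{i,t}\nabla\ell_{i,t}$ below the interval and $\widetilde A_{i,t} r_{i,t}\nabla\ell_{i,t}$ above it, and in both cases it is already bounded by $\lambda|u^-_{i,t}|RB$.

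Replace your mask assumption with this Lipschitz argument, which is the paper's ``$rA$, $r\max(A,0)$, or $r\min(A,0)$, each $r$-Lipschitz in $A$'' step. The rest of your proof then goes through unchanged, with no reinterpretation of the hypotheses and no restriction on $\lambda$.
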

\begin{proof}
Within each group, write $\langle\cdot\rangle$ for the valid-token mean and $c=(-u)_+$.
Eq.~\ref{eq:semifactual-stability} gives $\langle u\rangle=0$ and $\langle u^2\rangle\le1$, hence
\[
\langle c\rangle=\tfrac12\langle|u|\rangle
\le\tfrac12\sqrt{\langle u^2\rangle}\le\tfrac12.
\]
At differentiable points, the coefficient of a token's log-probability gradient in the clipped surrogate is $rA$, $r\max(A,0)$, or $r\min(A,0)$, each $r$-Lipschitz in $A$.
Replacing $A$ by $A-\lambda c$ therefore changes the group gradient by at most $\lambda RB\langle c\rangle\le\lambda H$.
Token-mean aggregation across groups preserves this bound.
\end{proof}

\paragraph{Stochastic-gradient model.}
Consider
\begin{equation}
\label{eq:analysis-update}
\theta_{n+1}=\theta_n+\eta\widehat v_n,
\qquad
\widehat v_n=\widehat g_n+\lambda_n\widehat h_n,
\end{equation}
where $\widehat g_n$ is the baseline gradient and $\lambda_n\widehat h_n$ is the gradient difference induced by credit augmentation, with $\|\widehat h_n\|\le H$ by Lemma~\ref{lem:bounded-augmentation}.
Let $\mathbb E_n$ denote the expectation conditional on the history before update $n$.
Assume $F$ is $L$-smooth, bounded above by $F_{\sup}$, and
\begin{equation}
\label{eq:analysis-assumptions}
\|\mathbb E_n\widehat g_n-\nabla F(\theta_n)\|\le\beta,
\qquad
\mathbb E_n\|\widehat v_n-\mathbb E_n\widehat v_n\|^2\le\sigma^2.
\end{equation}
The two gradient components may be correlated.
The parameter $\beta$ allows bias in the baseline direction.

\begin{scapotheorem}[Finite-duration augmentation]
\label{thm:finite-duration-augmentation}
Under the above assumptions, let $0<\eta\le1/L$ and use the schedule in Eq.~\ref{eq:lambda-schedule}.
For any $T\ge1$, set $\Delta_F=F_{\sup}-\mathbb EF(\theta_1)$ and $m_T=\min(T,N_0)$.
Then
\begin{equation}
\label{eq:finite-duration-bound}
\frac1T\sum_{n=1}^T\mathbb E\|\nabla F(\theta_n)\|^2
\le
\frac{2\Delta_F}{\eta T}+L\eta\sigma^2+\beta^2
+\bigl(2\beta H\lambda_0+H^2\lambda_0^2\bigr)\frac{m_T}{T}.
\end{equation}
\end{scapotheorem}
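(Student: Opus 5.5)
The plan is the standard ascent-lemma argument for $L$-smooth objectives under biased stochastic gradients, with the augmentation treated as an extra bounded bias that is active only for $n\le N_0$.

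\textbf{One-step bound.} By $L$-smoothness (we maximize $F$),
\[
F(\theta_{n+1})\ge F(\theta_n)+\eta\langle\nabla F(\theta_n),\widehat v_n\rangle-\tfrac{L\eta^2}{2}\|\widehat v_n\|^2 .
\]
Take $\mathbb E_n$ and write $m_n:=\mathbb E_n\widehat v_n$. The variance assumption gives $\mathbb E_n\|\widehat v_n\|^2\le\|m_n\|^2+\sigma^2$, so
\[
\mathbb E_nF(\theta_{n+1})\ge F(\theta_n)+\eta\langle\nabla F,m_n\rangle-\tfrac{L\eta^2}{2}\|m_n\|^2-\tfrac{L\eta^2}{2}\sigma^2 .
\]
Since $\eta\le1/L$, we have $\tfrac{L\eta^2}{2}\|m_n\|^2\le\tfrac{\eta}{2}\|m_n\|^2$. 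Applying the polarization identity $\langle a,b\rangle-\tfrac12\|b\|^2=\tfrac12\|a\|^2-\tfrac12\|a-b\|^2$ then yields
\[
\mathbb E_nF(\theta_{n+1})\ge F(\theta_n)+\tfrac{\eta}{2}\|\nabla F(\theta_n)\|^2-\tfrac{\eta}{2}\|\nabla F(\theta_n)-m_n\|^2-\tfrac{L\eta^2}{2}\sigma^2 .
\]

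\textbf{Controlling the bias term.} Decompose $m_n-\nabla F=(\mathbb E_n\widehat g_n-\nabla F)+\lambda_n\mathbb E_n\widehat h_n$. The first term has norm at most $\beta$ by \eqref{eq:analysis-assumptions}. The second has norm at most $\lambda_nH$, because $\|\mathbb E_n\widehat h_n\|\le\mathbb E_n\|\widehat h_n\|\le H$ by Lemma~\ref{lem:bounded-augmentation}. Squaring the triangle inequality gives
\[
\|\nabla F-m_n\|^2\le(\beta+\lambda_nH)^2=\beta^2+2\beta H\lambda_n+H^2\lambda_n^2 .
\]
This is the only place where the correlation between $\widehat g_n$ and $\widehat h_n$ could matter. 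The triangle inequality avoids any independence assumption, which is why the bound carries the cross term $2\beta H\lambda_0$ rather than something smaller. With the schedule \eqref{eq:lambda-schedule}, the extra terms equal $2\beta H\lambda_0+H^2\lambda_0^2$ for $n\le N_0$ and vanish otherwise. Summing over $n\le T$ therefore counts exactly $m_T=\min(T,N_0)$ active steps.

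\textbf{Telescoping.} Take total expectations, rearrange, and sum from $n=1$ to $T$:
\[
\tfrac{\eta}{2}\sum_{n=1}^T\mathbb E\|\nabla F(\theta_n)\|^2\le \mathbb EF(\theta_{T+1})-\mathbb EF(\theta_1)+\tfrac{\eta T}{2}\bigl(L\eta\sigma^2+\beta^2\bigr)+\tfrac{\eta}{2}m_T\bigl(2\beta H\lambda_0+H^2\lambda_0^2\bigr).
\]
Bound $\mathbb EF(\theta_{T+1})\le F_{\sup}$, so the first difference is at most $\Delta_F$. Multiplying by $2/(\eta T)$ gives \eqref{eq:finite-duration-bound}.

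\textbf{Main obstacle.} The main point of care is keeping the constant in front of $\beta^2$ and $\sigma^2$ equal to $1$. This is the reason for using the exact polarization identity rather than a Young-type split, and for applying the step-size condition $\eta\le1/L$ only to the $\|m_n\|^2$ term, not to the variance term. A secondary check is that Lemma~\ref{lem:bounded-augmentation} holds almost surely at each step with the advantages and stability scores fixed, so it transfers through the conditional expectation.
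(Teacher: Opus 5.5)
Your proposal is correct and follows essentially the same route as the paper. Both use the ascent inequality for $L$-smooth $F$, the variance split $\mathbb E_n\|\widehat v_n\|^2\le\|\mathbb E_n\widehat v_n\|^2+\sigma^2$, and the condition $\eta\le 1/L$ applied only to the mean term. Your polarization identity is the paper's $2\langle v,v+b\rangle=\|v\|^2+\|v+b\|^2-\|b\|^2$, and both use the bias bound $\|b_n\|\le\beta+H\lambda_n$ and telescoping with $\mathbb EF(\theta_{T+1})\le F_{\sup}$, so you recover the $m_T=\min(T,N_0)$ active-step count exactly as the paper does.
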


\begin{proof}
Write $v_n=\nabla F(\theta_n)$ and $b_n=\mathbb E_n\widehat v_n-v_n$, so $\|b_n\|\le\beta+H\lambda_n$.
Smoothness and Eq.~\ref{eq:analysis-assumptions} give
\[
\begin{aligned}
\mathbb E_nF(\theta_{n+1})
&\ge F(\theta_n)+\eta\langle v_n,v_n+b_n\rangle
-\tfrac{L\eta^2}{2}\bigl(\|v_n+b_n\|^2+\sigma^2\bigr)\\
&\ge F(\theta_n)+\tfrac{\eta}{2}\|v_n\|^2
-\tfrac{\eta}{2}(\beta+H\lambda_n)^2-\tfrac{L\eta^2}{2}\sigma^2,
\end{aligned}
\]
where the second line uses $\eta L\le1$ and
$2\langle v,v+b\rangle=\|v\|^2+\|v+b\|^2-\|b\|^2$.
Taking expectations, summing over $n$, and using $\mathbb EF(\theta_{T+1})\le F_{\sup}$ yields the claim.
\end{proof}

\paragraph{Implication for the schedule.}
For an unbiased baseline ($\beta=0$), the augmentation term is $H^2\lambda_0^2\min(T,N_0)/T$.
With fixed $\lambda_0$ and $N_0$, this contribution to the average stationarity bound decays as $N_0/T$ after augmentation ends.
Here $T$ is an arbitrary observation horizon, not a prescribed training budget.

\section{Additional Experimental Details}
\label{sec:additional-experimental-details}

\subsection{Training hyperparameters}
\label{sec:training-hyperparameters}

Table~\ref{tab:training-config} lists shared training hyperparameters and method-specific settings.
A rule-based verifier assigns reward 1 to a correct boxed answer and 0 otherwise, and no separate format reward is used.
All models share the following prompt template for mathematical reasoning during training and inference:

\begin{tcolorbox}[
    colback=PromptBoxBody, colframe=PromptBoxBorder,
    colbacktitle=PromptBoxHeader, coltitle=white,
    title={Prompt template}, fonttitle=\small\bfseries,
    fontupper=\small, boxrule=0.5pt, arc=1mm, boxsep=0pt,
    left=6pt, right=6pt, top=4pt, bottom=4pt,
    toptitle=2pt, bottomtitle=2pt, before skip=6pt, after skip=6pt
]
\texttt{\{question\}}\par\medskip
Please reason step by step, and put your final answer within \verb|\boxed{}|.
\end{tcolorbox}

All methods, including GSPO and SAPO, use token-mean loss reduction for better performance on mathematical reasoning tasks~\citep{yu2025dapo,wang2026arlarena}.

\begin{table}[H]
\caption{Shared and method-specific training hyperparameters. Batch sizes count prompts unless stated otherwise. Each rollout step contains two policy optimization steps. Clipping offsets are relative to 1.}
\label{tab:training-config}
\label{tab:method-hyperparameters}
\begin{center}
\small
\setlength{\tabcolsep}{5pt}
\renewcommand{\arraystretch}{1.06}
\begin{tabular*}{\textwidth}{@{\extracolsep{\fill}}lcc@{}}
\toprule
\textbf{Hyperparameter} & \textbf{Qwen3-4B-Base} & \textbf{Qwen3-1.7B-Base} \\
\midrule
\multicolumn{3}{@{}l}{\textit{Data and rollout settings}} \\
Training dataset & \multicolumn{2}{c}{DAPO-Math-17K~\citep{yu2025dapo}} \\
Maximum prompt length & \multicolumn{2}{c}{1,024} \\
Maximum response length & \multicolumn{2}{c}{16,384} \\
Rollout batch size (prompts) & \multicolumn{2}{c}{128} \\
Responses per prompt ($G$) & \multicolumn{2}{c}{8} \\
Sampling temperature / top-$p$ & \multicolumn{2}{c}{$1.0\;/\;1.0$} \\
\midrule
\multicolumn{3}{@{}l}{\textit{Optimization settings}} \\
Optimizer & \multicolumn{2}{c}{AdamW, $\beta_1=0.9$, $\beta_2=0.999$} \\
Learning rate & \multicolumn{2}{c}{$10^{-6}$} \\
Learning-rate schedule & \multicolumn{2}{c}{Constant} \\
Weight decay & \multicolumn{2}{c}{$0.01$} \\
Gradient clipping norm & \multicolumn{2}{c}{$1.0$} \\
Update mini-batch size (prompts) & \multicolumn{2}{c}{64} \\
Loss reduction & \multicolumn{2}{c}{Token-mean} \\
KL regularization & \multicolumn{2}{c}{Disabled} \\
\midrule
\multicolumn{3}{@{}l}{\textit{Training settings}} \\
Rollout steps & 300 & 500 \\
Policy optimization steps & 600 & 1,000 \\
Mixed precision & \multicolumn{2}{c}{BF16} \\
\midrule
\multicolumn{3}{@{}l}{\textit{Method-specific settings}} \\
\method{}: Correction strength ($\lambda_0$) & \multicolumn{2}{c}{$0.01$} \\
\method{}: Augmentation duration ($N_0$) & 120 & 200 \\
\multicolumn{3}{@{}l}{GRPO / CF-GRPO / FIPO / \method{}:} \\
\quad Policy clipping (lower / upper) & \multicolumn{2}{c}{$0.20\;/\;0.28$} \\
\quad Dual-clip coefficient & \multicolumn{2}{c}{$10.0$} \\
GSPO: Policy clipping (lower / upper) & \multicolumn{2}{c}{$3\times10^{-4}\;/\;4\times10^{-4}$} \\
SAPO: Gate temperatures ($\tau_{\mathrm{pos}}/\tau_{\mathrm{neg}}$) & \multicolumn{2}{c}{$1.0\;/\;1.05$} \\
CF-GRPO: Maximum probed spans per response & \multicolumn{2}{c}{10} \\
CF-GRPO: Span length & \multicolumn{2}{c}{5--50} \\
CF-GRPO: Token-weight bounds & \multicolumn{2}{c}{$[0.5,4.0]$} \\
FIPO: Future-KL decay & \multicolumn{2}{c}{$32.0$} \\
FIPO: Influence-weight clipping & \multicolumn{2}{c}{$[1.0,1.2]$} \\
FIPO: Safety threshold & \multicolumn{2}{c}{$10.0$} \\
\bottomrule
\end{tabular*}
\end{center}
\end{table}

Each rollout batch contains 128 prompt groups and is optimized once in mini-batches of 64 groups (512 responses), giving two policy optimization steps per rollout step.
Thus, $N_0=120/200$ policy optimization steps correspond to 60/100 rollout steps for 4B/1.7B.
Diagnostic checkpoint indices use rollout steps unless stated otherwise.

\subsection{Evaluation protocol and training curves}
\label{sec:evaluation-protocol}

Table~\ref{tab:evaluation-config} summarizes the evaluation datasets and sample counts.

\begin{table}[H]
\caption{Evaluation benchmarks and sample counts. Slash-separated problem counts follow the listed years. $^{*}$For GPQA-Diamond, we enumerate all $4!=24$ permutations of the multiple-choice options to avoid contamination.}
\label{tab:evaluation-config}
\begin{center}
\small
\setlength{\tabcolsep}{5pt}
\renewcommand{\arraystretch}{1.08}
\begin{tabular*}{\textwidth}{@{\extracolsep{\fill}}lcc@{}}
\toprule
\textbf{Benchmark} & \textbf{Problems} & \textbf{Samples per problem} \\
\midrule
AIME 2024 / 2025 / 2026 & 30 / 30 / 30 & 16 \\
AMC 2023 / 2024 / 2025 & 40 / 45 / 42 & 16 \\
HMMT Feb.\ 2025 / 2026 & 30 / 33 & 16 \\
BRUMO 2025 & 30 & 16 \\
SMT 2025 & 53 & 16 \\
Omni-Math & 2,821 & 1 \\
Minerva & 272 & 1 \\
OlympiadBench & 674 & 1 \\
\midrule
NoOp-AIME 2024 / 2025 / 2026 & 30 / 30 / 30 & 16 \\
ThinkBench-AIME 2024 / 2025 / 2026 & 120 / 120 / 120 & 4 \\
GPQA-Diamond & 198 & 24$^{*}$ \\
\bottomrule
\end{tabular*}
\end{center}
\end{table}

For Omni-Math, we use the 2,821-problem subset designed for rule-based evaluation.
We evaluate the final checkpoints using identical prompts and sample counts across methods, with temperature 0.7, top-$p=0.9$, and a maximum response length of 16,384 tokens.
Responses on the mathematical benchmarks are scored with Math-Verify~\citep{kydlicek2025mathverify}.

We perform rule-based validation before training and every 40 policy optimization steps.
Figure~\ref{fig:grpo-semifact-aime} presents the corresponding accuracy curves.

\section{Additional Experimental Results}
\label{sec:additional-results}

\subsection{Detailed Benchmark Results}
\label{sec:detailed-eval}

Table~\ref{tab:appendix-standard-detail} shows the yearly results underlying Table~\ref{tab:main-results}.

\begin{table}[H]
\caption{Yearly accuracy (\%) for Table~\ref{tab:main-results}. Bold and underlining denote the best and second-best results.}
\label{tab:appendix-standard-detail}
\label{tab:appendix-robust-detail}
\begin{center}
\fontsize{8}{9.5}\selectfont
\setlength{\tabcolsep}{1.5pt}
\renewcommand{\arraystretch}{1.05}
\setlength{\aboverulesep}{2pt}
\setlength{\belowrulesep}{2pt}
\colorlet{TableGroup}{TableGroup!65!white}
\begin{tabular*}{\textwidth}{@{\extracolsep{\fill}}l*{14}{r}@{}}
\toprule
& \multicolumn{8}{c}{\textbf{In-Distribution}} & \multicolumn{6}{c}{\textbf{Out-of-Distribution}} \\
\cmidrule(lr){2-9}\cmidrule(lr){10-15}
\textbf{Method} & \multicolumn{3}{c}{\textbf{AIME}} & \multicolumn{3}{c}{\textbf{AMC}} & \multicolumn{2}{c}{\textbf{HMMT}} & \multicolumn{3}{c}{\textbf{NoOp-AIME}} & \multicolumn{3}{c}{\textbf{ThinkBench-AIME}} \\
\cmidrule(lr){2-4}\cmidrule(lr){5-7}\cmidrule(lr){8-9}\cmidrule(lr){10-12}\cmidrule(lr){13-15}
& \textbf{24} & \textbf{25} & \textbf{26} & \textbf{23} & \textbf{24} & \textbf{25} & \textbf{25} & \textbf{26} & \textbf{24} & \textbf{25} & \textbf{26} & \textbf{24} & \textbf{25} & \textbf{26} \\
\midrule
\rowcolor{TableGroup}\multicolumn{15}{c}{\textbf{Qwen3-4B-Base}} \\
Base & 8.33 & 7.92 & 6.67 & 47.66 & 32.50 & 32.89 & 1.04 & 3.03 & 8.33 & 7.29 & 4.58 & 8.96 & 3.75 & 6.88 \\
GRPO & 23.54 & 23.13 & 19.58 & 66.41 & 55.69 & 57.74 & 10.83 & 12.50 & 21.25 & 21.25 & 19.17 & 25.42 & 22.92 & 19.58 \\
GSPO & 27.08 & 22.92 & 20.42 & \best{73.59} & 61.39 & 59.82 & 10.42 & 16.67 & 19.58 & \second{21.46} & 19.38 & 24.58 & 22.71 & 18.96 \\
SAPO & 26.46 & 24.17 & 21.46 & \second{72.50} & 59.31 & \best{65.33} & 12.29 & \second{17.80} & 21.25 & 21.04 & 18.96 & \best{27.71} & 22.50 & 20.21 \\
CF-GRPO & 24.17 & 24.17 & 19.58 & 68.44 & 59.03 & 59.23 & \second{13.13} & 14.96 & \best{23.75} & \second{21.46} & 20.00 & 26.88 & \second{23.33} & 20.42 \\
FIPO & \second{27.50} & \second{27.08} & \second{21.67} & 69.06 & \second{61.81} & 63.24 & 12.08 & \best{18.18} & \second{23.33} & 20.63 & \second{21.25} & 26.67 & 22.29 & \second{22.71} \\
\midrule
\textbf{\method{}} & \best{29.17} & \best{27.71} & \best{26.25} & 70.31 & \best{63.19} & \second{63.84} & \best{13.96} & \best{18.18} & 22.71 & \best{25.62} & \best{22.50} & \second{27.50} & \best{26.46} & \best{24.38} \\
\midrule
\rowcolor{TableGroup}\multicolumn{15}{c}{\textbf{Qwen3-1.7B-Base}} \\
Base & 3.96 & 3.75 & 3.13 & 32.66 & 17.50 & 20.68 & 0.00 & 1.33 & 2.71 & 1.67 & 2.71 & 4.58 & 3.54 & 1.46 \\
GRPO & 10.63 & 6.67 & \second{5.83} & \second{46.41} & 28.19 & 25.30 & 1.25 & 2.65 & \second{8.54} & 5.00 & 4.79 & 9.79 & 6.46 & \second{6.46} \\
GSPO & 9.58 & 7.08 & 5.42 & 44.38 & 27.08 & 27.98 & 1.88 & 2.27 & 6.46 & 6.04 & 5.42 & 10.00 & 6.46 & 5.21 \\
SAPO & \second{13.54} & \second{8.54} & \best{7.92} & \best{51.09} & \second{33.61} & \second{31.99} & \second{3.54} & \best{5.68} & 7.50 & \second{7.71} & \best{8.96} & \second{11.67} & \second{8.75} & \best{8.33} \\
CF-GRPO & 9.17 & 5.21 & \second{5.83} & \second{46.41} & 23.75 & 27.08 & 2.50 & 3.41 & \best{8.75} & 6.04 & 5.00 & 9.58 & 6.88 & 4.38 \\
FIPO & 8.75 & 7.50 & 4.79 & 46.25 & 25.28 & 27.08 & 1.04 & 1.89 & 7.92 & 5.42 & 6.46 & 10.21 & 5.00 & 5.83 \\
\midrule
\textbf{\method{}} & \best{14.79} & \best{12.92} & \best{7.92} & \best{51.09} & \best{37.08} & \best{36.76} & \best{4.79} & \second{5.11} & 8.33 & \best{9.17} & \second{7.71} & \best{16.25} & \best{11.46} & \best{8.33} \\
\bottomrule
\end{tabular*}
\end{center}
\end{table}

\clearpage
\subsection{Pass@\texorpdfstring{$k$}{k} Results}
\label{sec:passk-results}

We report Pass@$k$ on AIME and AMC, measuring the probability of obtaining at least one correct answer within $k$ attempts.
We estimate the curves from 128 responses per problem using the standard unbiased estimator, with the same sampling and scoring settings as the main evaluation.
Figure~\ref{fig:appendix-passk} shows that \method{} achieves the highest Pass@128 at both model scales.

\begin{figure}[H]
\centering
\includegraphics[width=\textwidth]{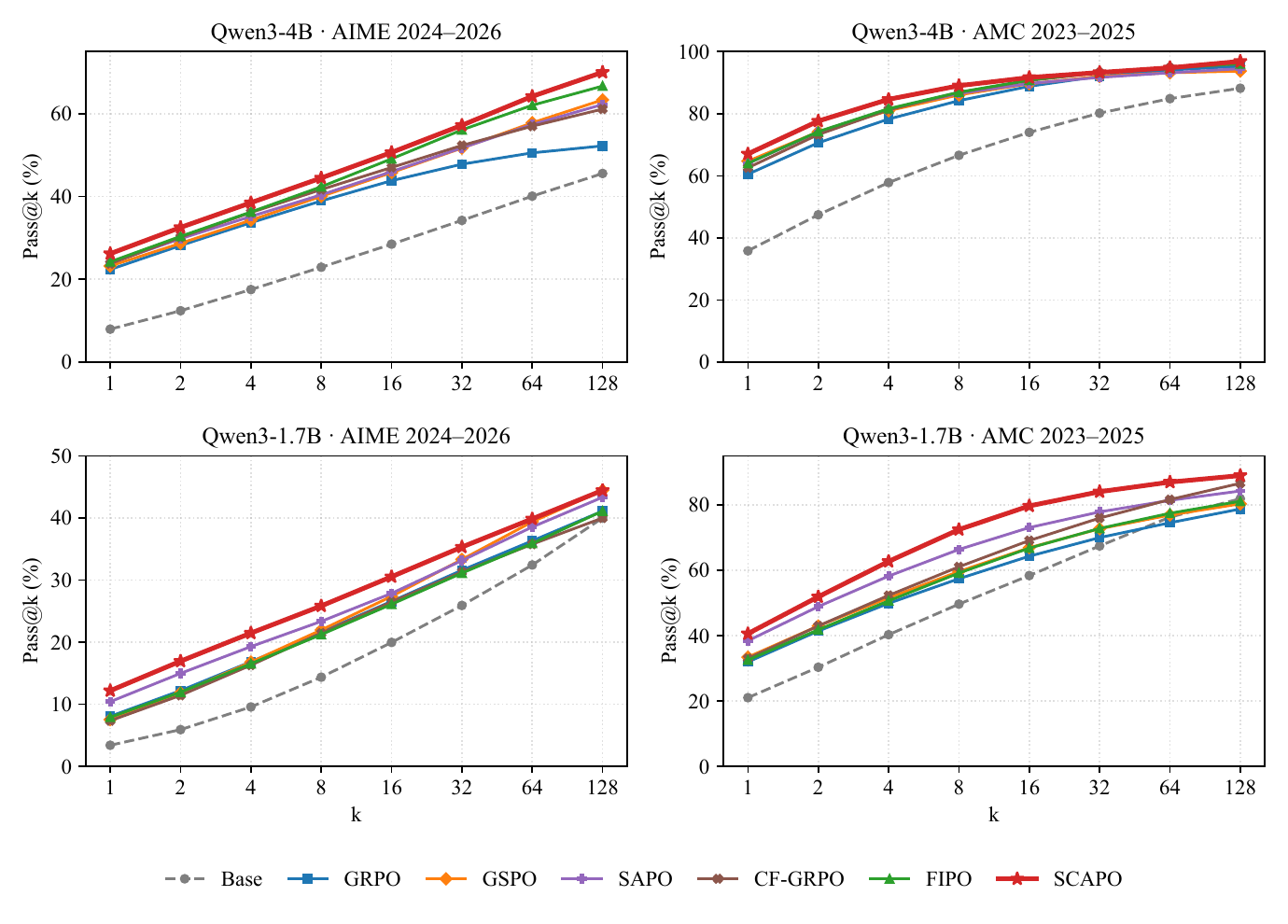}
\caption{Pass@$k$ on the AIME 2024--2026 and AMC 2023--2025 benchmarks at both model scales.}
\label{fig:appendix-passk}
\end{figure}

\end{document}